\newtheorem{observation}{Observation}
\newtheorem{theorem}{Theorem}
\newtheorem*{theorem*}{Theorem}
\newenvironment{sketch}{\proof}{\endproof}
\tikzset{
  vertex/.style={circle,draw,black,align=center,inner sep=0cm, minimum size=0.15cm,fill=black,anchor=center},
  line/.style={black},
  start-vertex/.style={vertex, minimum size=0.4cm,fill=white},
  goal-vertex/.style={vertex, minimum size=0.4cm,fill=white,rectangle},
}
\title{Fault-Tolerant Offline Multi-Agent Path Planning}
\author{
  Keisuke Okumura,\textsuperscript{\rm 1}
  S{\'e}bastien Tixeuil\textsuperscript{\rm 2}
}
\begin{document}

\maketitle
\begin{abstract}
  We study a novel graph path planning problem for multiple agents that may crash at runtime, and block part of the workspace.
  In our setting, agents can detect neighboring crashed agents, and change followed paths at runtime.
  The objective is then to prepare a set of paths and switching rules for each agent, ensuring that all correct agents reach their destinations without collisions or deadlocks, despite unforeseen crashes of other agents.
  Such planning is attractive to build reliable multi-robot systems.
  We present problem formalization, theoretical analysis such as computational complexities, and how to solve this offline planning problem.
\end{abstract}

\section{Introduction}
Building robust and resilient multi-robot systems is an emerging and important topic~\cite{prorok2021beyond}, since those systems are expected to be infrastructures of logistics or product lines, as seen in fleet operations in automated warehouses~\cite{wurman2008coordinating}.
One fundamental problem in multi-robot systems is multi-agent path planning (MAPP), which assigns a collision/deadlock-free path to each agent.
Therefore, designing robust and resilient approaches to MAPP is a critical component to realize reliable multi-robot systems.

\emph{Robot faults} are not rare in practice due to sensor/motor errors, battery consumption, or other unexpected events.
Here, ``faults'' are beyond delays of robot motions as studied in the MAPP literature~\cite{atzmon2020robust,shahar2021safe,okumura2022offline}.
Rather, we consider them as critical events, e.g., agents forever stop their motions due to crashes, or, misbehave from the planning.
Then, fault-tolerant properties are essential for building reliable multi-robot systems, especially in lifelong scenarios involving a large number of robots.
Nevertheless, the cutting-edge MAPP studies largely overlook this aspect and assume that agents perfectly follow the offline planning that is prepared without any fault.

To this end, as the first step of fault-tolerant MAPP, \emph{we study an MAPP problem where agents may unexpectedly crash at runtime}.
The crashed agents then forever block part of the workspace.
Correct agents (i.e., non-crashed ones) can detect crashes through \emph{local observations} and then switch their executing path on the fly, based on this observation.
Our objective is to find a set of paths and their switching rules for each agent, such that correct agents can reach their destinations regardless of crash patterns.
We refer to the corresponding offline planning problem as \emph{MAPP with Crash Faults (MAPPCF)}.

Throughout the paper, we rely on local observations (rather than global observations) that permit to immediately detect a crash if it occurs on a neighboring location.
Therefore, significantly different from conventional MAPP studies, the challenge is to design a safe planning methodology that avoids collisions and deadlocks, under the assumption that agents behave following their plan, and their own observed information about other agents' crashes.

The contributions of this paper are twofold:
\paragraph{We formalize and analyze MAPPCF.}
Our formalization includes the conventional synchronous execution model of multi-agent pathfinding (MAPF)~\cite{stern2019def} where all agents take actions simultaneously, as well as the recently proposed asynchronous execution model called OTIMAPP~\cite{okumura2022offline}.
In our model, agents can switch executing paths on the fly according to local observation results.
The observation is done using a \emph{failure detector}, a black box function that tells an agent whether an adjacent location is occupied by a crashed agent, occupied by a correct agent, or vacant.
We consider anonymous and named failure detectors; the former cannot identify a crashed agent (only a crashed location).
After characterizing relationships between execution model and failure detector variants, we analyze the computational complexities of MAPPCF.
Our main results are that finding a solution is NP-hard, and verifying a solution is co-NP-complete.

\paragraph{We propose a methodology to solve MAPPCF.}
The proposed method, \emph{decoupled crash faults resolution framework (\algoname)}, resolves the effects of crashes one by one by preparing backup paths.
We evaluate \algoname with named failure detectors in grid environments and observe that \algoname can address more problem instances compared to computing a set of vertex disjoint paths, i.e., a trivially fault-tolerant approach since correct agents can reach their destinations regardless of crash patterns.
We further observe that the difficulty of finding solutions stems both from the problem instance size (e.g., the number of agents), and from the number of crashes to be tolerated.

\medskip
The paper organization is as follows.
\Cref{sec:problem-definition} formalizes MAPPCF.
\Cref{sec:prelim} and \ref{sec:complexity} present preliminary and computational complexity analysis, respectively.
\Cref{sec:solver} describes \algoname.
\Cref{sec:evaluation} presents empirical results obtained with \algoname.
\Cref{sec:related-work} reviews related work.
\Cref{sec:conclusion} concludes the paper with discussion of future directions.
The supplementary material is available on \url{https://kei18.github.io/mappcf}.
This paper uses ``MAPP'' as a generalized term for path planning for multiple agents, not limited to the formalization of classical MAPF.

\section{Problem Definition}
\label{sec:problem-definition}
\paragraph{MAPPCF Instance}
An \emph{MAPPCF instance} is given by a graph $G = (V, E)$, a set of agents $A=\{1, 2, \ldots, n\}$, the maximum number of crashes $f \in \mathbb{N}_{\geq 0}$, a tuple of starts $(s_1, s_2, \ldots, s_n)$, and goals $(g_1, g_2, \ldots, g_n)$, where $s_i, g_i \in V$ and for all $i \neq j$, $s_i \neq s_j$ and $g_i \neq g_j$.
An MAPPCF instance on digraphs is similar to the undirected case.

\paragraph{Plan}
A \emph{plan} for one agent comprises a list of paths each defined on $G$ and \emph{transition rules}.
At runtime, the agent moves along one path, called \emph{executing path}, in the plan, while always occupying one vertex.
Meanwhile, the agent switches its executing path following the transition rules.
A plan contains one special path called \emph{primary path} which is initially executed.
A transition rule is defined with a \emph{failure detector} and \emph{progress index}, explained below.

\paragraph{Crash and Failure Detector}
During plan execution, agents are potentially crashed.
\emph{Crashed} agents eternally remain in their occupying vertices.
We refer \emph{correct} agents to those who are not crashed.
Correct agents cannot pass where crashed agents are located.
However, a correct agent can use a \emph{failure detector} at runtime to change its executing path.
Doing so enables the correct agent to reach its goal under crash faults.
A failure detector tells a correct agent about the existence of an agent on an adjacent vertex, and if so, whether it has crashed or not.
We consider two types of detectors.
A detector is called \emph{named (NFD)} when it can identify who is crashed, otherwise \emph{anonymous (AFD)}.
Formal definitions are as follows.
Assume that an agent $i$ is at $v \in V$.
Let denote \neigh{v} a set of adjacent vertex of $v$.
Then, $\AFD:\neigh{v} \mapsto \{ \circ, \times, \bot \}$ and $\NFD:\neigh{v} \mapsto \{ \circ, \bot \} \cup A$.
Here, $\circ$ and $\times$ respectively correspond to a correct or crashed agent, otherwise $\bot$ is returned (no agent is there).
NFD returns an agent instead of $\times$.

\paragraph{Progress Index}
We use a \emph{progress index} $\clock{i} \in \mathbb{N}_{>0}$;
the agent $i$ is at $\langle\clock{i}\rangle$--th vertex of its executing path.
For each transition of executing paths, the progress index is initialized with one.
It increases up to the length of the executing path.

\paragraph{Transition Rule}
The rule specifies the next executing path given the current executing path, progress index, and results of the failure detector.
We then describe two execution models that differ in how to increment progress indexes.

\paragraph{Synchronous Execution Model (SYN)}
In this model, all agents take actions simultaneously.
More precisely, all the correct agents perform the following at the same time:
\emph{(i)}~may crash,
\emph{(ii)}~change executing paths if necessary,
\emph{(iii)}~move to their next vertices, and
\emph{(iv)}~increment their progress indexes.
Two types of collisions must be prohibited by plans:
\emph{a)}~vertex collisions, i.e., two agents are on the same vertex simultaneously, and
\emph{b)}~swap collisions, i.e., two agents swap their hosting vertices simultaneously.
Note that an agent can remain at its hosting vertex if its executing path contains the same vertex consecutively.

\paragraph{Sequential Execution Model (SEQ)}
In this model, agents take actions sequentially but we cannot control how agents are scheduled at runtime.
More precisely, given an infinite sequence of agents \E called \emph{execution schedule}, the agents are \emph{activated} in turn according to \E.
An activated agent performs the following:
\emph{(i)}~change executing paths if necessary,
\emph{(ii)}~move to its next vertex specified by the progress index if the vertex is unoccupied by other agents, and
\emph{(iii)}~increment its progress index if moved.
The agent remains on its hosting vertex when the next vertex is occupied.
Agents may crash at any time, except for the duration of the procedures for activation.
\E is unknown when offline planning, but we assume that every agent appears infinitely-many times in \E.

\paragraph{Solution}
Given an MAPPCF instance, a \emph{solution for SYN} is a set of plans $\{\plan_1, \plan_2, \ldots, \plan_n\}$ respectively for each agent, such that:
\begin{enumerate}
\item The primary path of $\plan_i$ begins with a start $s_i$.
\item For each path that is not primary, the path begins with a vertex where the agent changes its executing path.
\item The agent $i$ is ensured to reach its goal $g_i$ provided that $i$ follows $\plan_i$, regardless of other agents' crashes, when the total number of crashes is up to $f$.
\end{enumerate}
A \emph{solution for SEQ} is similar to SYN but the third condition should be satisfied for any execution schedule.
\Cref{fig:example} presents solution examples for both models.
Without crash assumptions (i.e., if $f=0$), solutions for SYN are equivalent to those of classical MAPF~\cite{stern2019def}, and solutions for SEQ are equivalent to those of OTIMAPP~\cite{okumura2022offline}.
We assume $f > 0$ in the reminder.

{
  \newcommand{\edgesize}{0.7}
  \newcommand{\boxsize}{2.6}
  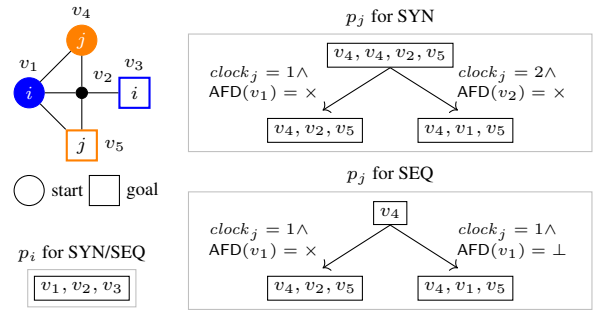
\begin{figure}[tb!]
    \centering
    \scriptsize
    \begin{tikzpicture}
      \begin{scope}[shift={(0, 0)}]
        \node[start-vertex,blue,text=white,label=above:{$v_1$}](v1) at (0, 0) {$i$};
        \node[vertex,label=above right:{$v_2$}](v2) at ($(v1.center)+(\edgesize,0)$) {};
        \node[goal-vertex,draw=blue,thick,label=above:{$v_3$}](v3) at ($(v1.center)+(2*\edgesize,0)$) {$i$};
        \node[start-vertex,orange,text=white,label=above:{$v_4$}](v4) at ($(v1.center)+(\edgesize,\edgesize)$) {$j$};
        \node[goal-vertex,draw=orange,thick,label=right:{$v_5$}](v5) at ($(v1.center)+(\edgesize,-\edgesize)$) {$j$};
        \foreach \u / \v in {v1/v2,v2/v3,v2/v4,v2/v5,v1/v4,v1/v5}
        \draw[line] (\u) -- (\v);
      \end{scope}
      \begin{scope}[shift={(0.7,-2.6)}]
        \node[draw=black](p1-1) at (0, 0) {$v_1, v_2, v_3$};
        \coordinate(c1) at (-0.8,-1.2);
        \coordinate(c2) at (0.8,0.2);
        \node[fit=(p1-1),draw=lightgray] {};
        \node[above=0.1 of p1-1] {$\plan_i$ for SYN/SEQ};
      \end{scope}
      \begin{scope}[shift={(4.8, 0.5)}]
        \node[draw=black](p2-1) at (0, 0) {$v_4, v_4, v_2, v_5$};
        \node[draw=black](p2-2) at (-1, -1) {$v_4, v_2, v_5$};
        \node[draw=black](p2-3) at (1, -1) {$v_4, v_1, v_5$};
        \draw[line,->](p2-1.south) -- ($(p2-2.north)+(0.1,0.1)$);
        \draw[line,->](p2-1.south) -- ($(p2-3.north)+(-0.1,0.1)$);
        \node[anchor=west] at (-2.5,-0.25) {\tiny $\clock{j} = 1 \land$};
        \node[anchor=west] at (-2.5,-0.5) {\tiny $\AFD(v_1)=\times$};
        \node[anchor=west] at (0.8,-0.25) {\tiny $\clock{j} = 2 \land$};
        \node[anchor=west] at (0.8,-0.5) {\tiny $\AFD(v_2)=\times$};
        \coordinate(c1) at (-\boxsize,-1.2);
        \coordinate(c2) at (\boxsize,0.2);
        \node[fit=(c1)(c2),draw=lightgray,label=above:{$\plan_j$ for SYN}] {};
      \end{scope}
      \begin{scope}[shift={(4.8, -1.6)}]
        \node[draw=black](p2-1) at (0, 0) {$v_4$};
        \node[draw=black](p2-2) at (-1, -1) {$v_4, v_2, v_5$};
        \node[draw=black](p2-3) at (1, -1) {$v_4, v_1, v_5$};
        \draw[line,->](p2-1.south) --
        ($(p2-2.north)+(0.1,0.1)$);
        \draw[line,->](p2-1.south) --
        ($(p2-3.north)+(-0.1,0.1)$);
        \node[anchor=west] at (-2.5,-0.25) {\tiny $\clock{j} = 1 \land$};
        \node[anchor=west] at (-2.5,-0.5) {\tiny $\AFD(v_1)=\times$};
        \node[anchor=west] at (0.8,-0.25) {\tiny $\clock{j} = 1 \land$};
        \node[anchor=west] at (0.8,-0.5) {\tiny $\AFD(v_1)=\bot$};
        \coordinate(c1) at (-\boxsize,-1.2);
        \coordinate(c2) at (\boxsize,0.2);
        \node[fit=(c1)(c2),draw=lightgray,label=above:{$\plan_j$ for SEQ}] {};
      \end{scope}
      \begin{scope}[shift={(0, -1.3)}]
        \node[start-vertex, label=right:{\scriptsize start}](label-s) at (0, 0){};
        \node[goal-vertex, label=right:{\scriptsize goal}](label-g) at (1.0, 0){};
      \end{scope}
    \end{tikzpicture}
    \caption{Solution example with AFD.}
    \label{fig:example}
  \end{figure}
}

\paragraph{Remarks}
An index of sequences starts at one.
In SYN, a solution must prevent collisions.
In SEQ, agents are assumed to follow planned paths, while avoiding collisions locally (e.g., by adjusting their velocity).
Rather, a solution must prevent deadlocks, wherein several agents block their progress from each other.
The detailed analyses for SEQ appear in~\cite{okumura2022offline}.
Implementations of failure detectors depend on applications, e.g., using heartbeats as commonly used in distributed network systems~\cite{felber1999failure} or multi-robot platforms such that environments can detect robot faults~\cite{kameyama2021active}.
Herein, failure detectors are assumed to be black-box functions.

\section{Preliminary Analysis}
\label{sec:prelim}

We first present two fundamental analyses to grasp the characteristics of MAPPCF: the model power and the necessary condition for instance to include a solution.

\subsection{Model Power}
A model $X$ is \emph{weakly stronger} than another model $Y$ when all solvable instances in $Y$ are also solvable in $X$.
$X$ is \emph{strictly stronger} than $Y$ when it is weakly stronger than $Y$ and there exists an instance that is solvable in $X$ but unsolvable in $Y$.
Two models are \emph{equivalent} when both are respectively weakly stronger than another.

A \emph{model} of MAPPCF is specified by two components: \emph{(i)} whether the failure detector is anonymous (AFD) or named (NFD), and \emph{(ii)} whether the execution model is synchronous (SYN) or sequential (SEQ).
Characterizing model power, i.e., which model is stronger than another, is important because it can be instrumental when implementing the algorithm.
For instance, AFD is intuitively easier to implement than NFD.
So, if those two models have equivalent power, then we may not need to realize NFD.

The main results are summarized in \cref{fig:model-lattice}.
Several relationships are still open questions, e.g., whether NFD is strictly stronger than AFD in SYN.
In what follows, we present three theorems for the model power analysis.

{
  \begin{figure}[th!]
    \centering
    \begin{tikzpicture}
      \tikzset{
        model/.style={rectangle,draw,black,inner sep=5pt,anchor=center},
      }
      \scriptsize
      \begin{scope}[shift={(0, 0)}]
        \node[model](v1) at (0, 0) {SYN + NFD};
        \node[model](v2) at ($(v1)+(-2.5,-0.5)$) {SYN + AFD};
        \node[model](v3) at ($(v1)+(2.5,-0.5)$) {SEQ + NFD};
        \node[model](v4) at ($(v1)+(0,-0.9)$) {SEQ + AFD};
        \foreach \u / \v in {v3/v1,v4/v3,v4/v2}
        \draw[->,thick] (\u) -- (\v);
        \draw[->,thick,densely dashed] (v2) -- (v1);
      \end{scope}
    \end{tikzpicture}
    \caption{
      Relationship of models.
      `$X \rightarrow Y$' denotes that model $Y$ is strictly stronger than model $X$.
      A dashed arrow means weakly stronger relationship.
    }
    \label{fig:model-lattice}
  \end{figure}
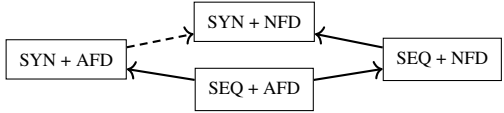
}

\begin{theorem}
  When using the same execution model, NFD is weakly stronger than AFD.
\end{theorem}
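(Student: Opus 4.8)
The plan is to give a simulation argument: after fixing the common execution model, I would mechanically rewrite any AFD solution into an NFD solution that behaves identically under every crash pattern. The crucial observation is that an NFD reading always determines the corresponding AFD reading. Define a \emph{forgetting map} $\phi : \{\circ, \bot\} \cup A \to \{\circ, \times, \bot\}$ by $\phi(\circ) = \circ$, $\phi(\bot) = \bot$, and $\phi(a) = \times$ for every $a \in A$. Physically, $\phi$ collapses the identity of a crashed neighbor back to the anonymous symbol $\times$ while leaving the ``correct agent'' and ``vacant'' cases untouched. Since a given crash configuration produces exactly the same physical occupancy of neighboring vertices regardless of which detector is queried, applying $\phi$ to an NFD query reproduces precisely the AFD result in that configuration.

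First I would fix the execution model (SYN or SEQ) and take an arbitrary instance together with an AFD solution $\{\plan_1, \dots, \plan_n\}$. Next I would construct NFD plans $\{\plan_1', \dots, \plan_n'\}$ by keeping every path unchanged and rewriting each transition rule: wherever a rule fires on an AFD result $r$ at a neighbor $u$, the new rule fires on any NFD result $r'$ with $\phi(r') = r$. Concretely, a condition ``$\AFD(u) = \times$'' becomes ``$\NFD(u) \in A$'', while conditions on $\circ$ and $\bot$ are literally preserved. This rewriting yields valid plans, since paths, the primary-path designation, and progress indices are untouched; only the detector domain of the transition rules changes.

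Then I would argue by induction on the execution (on global steps in SYN, or on the prefix of the execution schedule in SEQ) that, for every fixed crash pattern with at most $f$ crashes---and, in SEQ, every schedule---the configuration reached under $\{\plan_i'\}$ coincides step-for-step with that reached under $\{\plan_i\}$. The inductive step holds because the two executions agree on all positions and progress indices so far, hence query the same neighbors in the same physical configuration; by the observation above, each NFD reading maps under $\phi$ to the AFD reading that the original rule used, so both executions take the same transition and the same move. Since the AFD solution guarantees every correct agent reaches its goal under all admissible crash patterns (and schedules), the identical NFD execution inherits this guarantee, so $\{\plan_i'\}$ is an NFD solution and NFD is weakly stronger than AFD.

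I expect the construction and the coupling to be routine; the only point requiring care is the precise rewriting of transition rules so that the NFD plan remains a legitimate plan and the coupling is exact---in particular, ensuring that no NFD reading is left without a matching rule, and that the ``$\times$'' case is translated by the set-membership test $\NFD(u) \in A$ rather than a test against a specific identifier. This is what guarantees the NFD plan reacts to crashes in exactly the configurations where the AFD plan did, making the two executions indistinguishable.
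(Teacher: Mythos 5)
Your proposal is correct and is exactly the paper's argument, which the authors compress into the single line ``NFD can emulate AFD by dropping `who''': your forgetting map $\phi$ and the step-by-step coupling are just a careful formalization of that emulation. Nothing in your elaboration diverges from the paper's route; it simply makes the routine simulation explicit.
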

\begin{proof}
  NFD can emulate AFD by dropping ``who.''
\end{proof}

\begin{theorem}
  When using the same failure detector types, SYN is strictly stronger than SEQ.
\end{theorem}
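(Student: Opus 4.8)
The plan is to establish the two halves of the claim separately: first that SYN is \emph{weakly stronger} than SEQ, i.e.\ every SEQ-solvable instance is SYN-solvable, and then that the containment is strict by exhibiting an instance solvable in SYN but not in SEQ. The same failure-detector type is fixed throughout, so the transformation in the first part may only use information the detector already provides.

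For the weakly-stronger direction I would take an arbitrary SEQ solution and compile it into a SYN solution by \emph{serializing} the synchronous execution. Concretely, I encode into every path a fixed round-robin schedule: agent $i$ is permitted to advance only on synchronous rounds congruent to $i$ modulo $n$, and on all other rounds it deliberately stays in place (via a repeated vertex). On its own round, agent $i$ consults the detector and moves to its next vertex only if that vertex is reported vacant ($\bot$); otherwise it waits for its next turn. Crash detection ($\times$, or a named agent) triggers exactly the path switches prescribed by the SEQ plan. Because at most one agent moves per synchronous round, no vertex or swap collision can ever occur, so the synchronous schedule is legal. Moreover the resulting run is precisely a round-robin sequential execution with crashes occurring at round boundaries, a schedule covered by the SEQ solution; hence it is deadlock-free and every correct agent reaches its goal. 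The global synchronous clock keeps all correct agents' phases aligned even after crashes, which is what makes the round-robin encodable with purely local information.

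For strictness I would build a separating instance around the observation that a \emph{cyclic rotation} is the canonical task distinguishing the two models: on a cycle on which every agent must advance to its successor, SYN solves the instance in a single simultaneous step (no vertex collision, and no swap collision since the cycle has length at least three), whereas under SEQ every agent's successor vertex is occupied, so no activation ever enables a move and the configuration deadlocks for every schedule. I would take such a rotation as the zero-crash core and pad it with crash-triggered backup paths so that, whenever the detector reports a crash, the correct agents reroute through spare capacity and still reach their goals synchronously, for every crash pattern of size at most $f$. The zero-crash behaviour alone is already unsolvable in SEQ, so the padded instance admits no SEQ solution, while by construction it admits a SYN solution.

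The main obstacle is this last construction: I must design the padding so that the instance stays SYN-solvable for \emph{every} crash pattern up to $f$ (a crash on the cycle blocks a vertex that may be another agent's goal, so the rerouting must genuinely restore reachability), while simultaneously guaranteeing that no SEQ solution exists. The delicate point is that a fully vertex-disjoint routing would itself be a trivially deadlock-free SEQ solution, so the gadget must force the zero-crash case onto the cyclic rotation and make the backups usable only after a crash has freed the necessary space. Verifying both properties over all crash patterns, rather than the routine check that rotation separates the models at $f=0$, is where the real work lies.
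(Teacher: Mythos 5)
Your first half (weak strongness via round-robin serialization) is essentially the paper's own argument: the paper also fixes one schedule such as $(1,2,\ldots,n,1,2,\ldots,n,\ldots)$, lets only one agent advance per synchronous timestep, and inherits correctness from the SEQ solution. That part is fine.

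The strictness half, however, has a genuine gap: the cyclic-rotation core cannot be padded into a valid separating instance. For your SEQ-deadlock argument to work, every agent's successor vertex must be occupied, i.e., the cycle is fully occupied and $g_i = s_{i+1}$ for every $i$. But the paper assumes $f > 0$ throughout, and then the ``limitation of other starts'' necessary condition (\cref{thrm:necessary}) is violated: taking $B = \{i+1\}$, every path from $s_i$ to $g_i$ terminates at $g_i = s_{i+1}$, so if agent $i+1$ crashes at its start \emph{before moving}, agent $i$'s goal is permanently blocked and no backup path can ever restore reachability. Hence your padded instance is unsolvable in SYN as well, and no separation is obtained. The padding idea is also conceptually backwards: in this model crashed agents occupy vertices forever, so a crash never ``frees spare capacity'' --- it only removes capacity, which is exactly why a crash on another agent's goal is fatal. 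The paper's separating instance (\cref{fig:sync-seq}) works on a different principle that your construction would need to adopt: there, agent $j$ must traverse one of two middle vertices on $i$'s only route, and under SEQ the adversarial schedule plus the purely local detector means $j$ cannot distinguish whether $i$ is still at its start or already at its goal (both lie outside $j$'s observation range), so some schedule forces $j$ to enter a middle vertex prematurely, where a crash of $j$ strands $i$; under SYN, the global clock lets $j$ simply wait the known number of timesteps for $i$ to pass (or detect $i$'s crash on an adjacent middle vertex and route around it). In short, the separation in this model comes from synchrony resolving \emph{local observational ambiguity about timing}, not from simultaneous-motion capabilities like rotations, and importantly the separating instance must keep every goal reachable under all crash patterns of size at most $f$ --- which full-cycle rotations structurally cannot do.
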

\begin{proof}
  \Cref{fig:sync-seq} shows an instance that is solvable for SYN but unsolvable for SEQ.
  In SYN, the agent $j$ can wait until $i$ passes the middle two vertices, and according to crash patterns, $j$ can change its path towards its goal.
  However, in SEQ, there are execution schedules that $j$ enters either of the middle two vertices prior to $i$ because $j$ cannot distinguish whether $i$ is on its start or goal.
  If $j$ is crashed there and $i$ still remains at its start, $i$ cannot reach its goal.

  Next, we prove that every solvable instance in SEQ is solvable in SYN.
  Consider constructing a new solution $Z\sub{syn}$ in SYN given a solution $Z\sub{seq}$ in SEQ.
  This is achieved by, considering one execution schedule (e.g., (1, 2, \ldots, n, 1, 2, \ldots, n, \ldots)) and allowing $Z\sub{syn}$ to move agents in their turn.
  For instance, at timestep one, only agent-1 is allowed to move, and at timestep two, only agent-2 is allowed to move, and so forth.
  With appropriate modifications of paths specified $Z\sub{seq}$, since $Z\sub{seq}$ solves the original instance, $Z\sub{syn}$ also solves the instance in SYN.
\end{proof}

{
  \newcommand{\edgesize}{0.6cm}
  \newcommand{\edgesizeB}{0.3cm}
  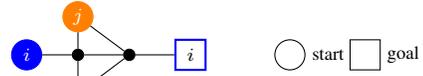
\begin{figure}[tb!]
    \centering
    \scriptsize
    \begin{tikzpicture}
      \begin{scope}[shift={(0, 0)}]
        \node[start-vertex,blue,text=white](v1) at (0, 0) {$i$};
        \node[vertex,right=\edgesize of v1.center](v2) {};
        \node[vertex,right=\edgesize of v2.center](v3) {};
        \node[goal-vertex,draw=blue,thick,right=\edgesize of v3.center](v4) {$i$};
        \node[start-vertex,orange,text=white,above=\edgesizeB of v2.center](v5) {$j$};
        \node[goal-vertex,draw=orange,thick,below=\edgesizeB of v2.center](v6) {$j$};
        \foreach \u / \v in {v1/v2,v2/v3,v3/v4,v2/v5,v3/v5,v2/v6,v3/v6}
        \draw[line] (\u) -- (\v);
      \end{scope}
      \begin{scope}[shift={(3.5, 0)}]
        \node[start-vertex, label=right:{\scriptsize start}](label-s) at (0, 0){};
        \node[goal-vertex, label=right:{\scriptsize goal}](label-g) at (1.0, 0){};
      \end{scope}
    \end{tikzpicture}
    \caption{Solvable instance in SYN but unsolvable in SEQ.}
    \label{fig:sync-seq}
  \end{figure}
}

\begin{theorem}
  In SEQ, NFD is strictly stronger than AFD.
  \label{thrm:nfd-afd}
\end{theorem}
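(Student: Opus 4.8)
The plan is to lean on Theorem 1, which already establishes that NFD is \emph{weakly} stronger than AFD under a fixed execution model; in SEQ it therefore suffices to exhibit a single instance that is solvable with NFD but unsolvable with AFD. The entire content of the theorem is thus this separation, and I would build it around the one capability NFD has over AFD: when a correct agent observes a crashed neighbor, NFD reveals \emph{which} agent crashed, whereas AFD only reports that \emph{some} agent crashed at that location. Accordingly, I want a gadget in which two distinct crash patterns leave an identical $\{\circ,\times,\bot\}$ signature along the observing agent's trajectory, yet demand incompatible responses from it.

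Concretely, I would set $f=1$ and use three agents $i,j,k$ together with a shared chokepoint vertex $w$ that lies on both $j$'s and $k$'s primary paths, with $s_i$ adjacent to $w$ so that $i$ can read the detector at $w$. The construction is arranged so that if $j$ reaches $w$ and crashes there, then the still-correct agent $k$ is forced onto a detour conflicting with one continuation $R_1$ of $i$'s path, so $i$ must instead take the alternative continuation $R_2$; symmetrically, if $k$ crashes at $w$, the correct agent $j$ detours into $R_2$ and $i$ must take $R_1$. Under NFD the plan for $i$ keys its transition rule on the reported identity at $w$ (``$j$ at $w$'' $\Rightarrow R_2$, ``$k$ at $w$'' $\Rightarrow R_1$), and I would verify that this resolves every schedule and every single-crash pattern (including the no-crash and the crash-elsewhere cases) for all three agents.

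For the impossibility under AFD, I would argue by indistinguishability against an adversarial schedule. In both the ``$j$ crashes at $w$'' and ``$k$ crashes at $w$'' scenarios the detector at $w$ returns $\times$, so the only way $i$ could separate them is by observing the movements of the surviving correct agent. Since SEQ lets the adversary withhold activations, I would schedule so that $i$ is driven forward and forced to commit to $R_1$ or $R_2$ (to pass a vertex from which the other continuation is no longer reachable without revisiting $w$) \emph{before} the surviving agent ever appears in $i$'s neighborhood; at that commit point the two scenarios are AFD-identical. Whatever branch a fixed AFD plan selects there, the adversary then commits to the opposite scenario, after which the surviving correct agent occupies the conflicting detour and deadlocks with $i$. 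Because in SEQ an activated agent advances whenever its successor is free, the adversary can force $i$ to this branch point, so $i$ cannot safely stall until it disambiguates; hence no AFD solution exists while the NFD one does.

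The main obstacle I anticipate is the impossibility half, specifically the ``commit before you can see'' step: I must design the gadget so that every vertex on the way to the branch point has a free successor under the adversarial schedule (so $i$ genuinely advances and cannot wait safely), and so that the surviving agent's detour can be delayed past $i$'s commitment yet still reach the deadlock configuration afterwards. Getting the geometry to simultaneously guarantee AFD-indistinguishability up to commitment, forced progress of $i$, and a realizable post-commitment deadlock is the delicate part; the NFD solvability and the reduction to a single separating instance are comparatively routine.
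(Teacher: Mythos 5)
Your high-level strategy matches the paper exactly: invoke the weak dominance of NFD over AFD and exhibit a single SEQ instance solvable with NFD but not with AFD, built on the one extra bit NFD provides (\emph{who} crashed at a shared vertex). The gap is in your impossibility half, and it is twofold. First, the step ``in SEQ an activated agent advances whenever its successor is free, so the adversary can force $i$ to the branch point'' misreads the model: whether an agent advances is dictated by its \emph{plan}, not the scheduler, and plans can wait indefinitely (see the SEQ plan in \cref{fig:example}, whose primary path is the single vertex $v_4$ with transition rules keyed on observations, including $\bot$). Moreover the schedule is fair---every agent appears infinitely often---so the adversary can only delay the surviving agent finitely; eventually it must execute its own (scenario-dependent) detour, and even though AFD returns only $\circ$ rather than an identity, the survivor's \emph{position} over time can disambiguate the two crash scenarios. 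Your indistinguishability argument therefore has to cover waiting-and-branching plans, i.e., you must arrange that the two survivors' observable traces coincide from every vertex $i$ might occupy until after an irrevocable commitment---which is in tension with the very requirement that the two detours conflict with different continuations $R_1,R_2$. Second, with $f=1$ the punishment you rely on is a deadlock between two \emph{correct} agents, but in an undirected graph a correct agent can always switch to a backup path from its current vertex (backtracking included), so a head-on two-agent block is escapable unless you add one-way gadgetry; note the paper itself needed diode gadgets precisely to create irrevocability in undirected SEQ for the NP-hardness proof.

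The paper's construction (\cref{fig:seq-anonymous}) sidesteps both problems by spending a \emph{second} crash rather than forcing a deadlock: three symmetric agents must each enter a central vertex first (any other first move lands on another agent's goal, where a crash would starve that agent); after an anonymous crash at the center, the observing agent's only routes to its goal pass through one of the other two agents' goal vertices, and the adversary---who in SEQ may crash an agent at any time---simply crashes the observer on whichever goal it steps onto. Under AFD the observer cannot tell whose goal is safe (the crashed agent's), so some crash pattern starves a correct agent; under NFD it routes through the identified crashed agent's goal, and even a further crash there harms no correct agent. So the separating mechanism is goal-blocking with $f\geq 2$, not scheduling-forced deadlock with $f=1$; indeed it is doubtful that your $f=1$ chokepoint gadget is unsolvable for AFD at all, since once the single allowed crash has been observed the rest of the execution is crash-free and identity matters far less. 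To repair your proof you would either need to move to the two-crash, crash-the-observer mechanism, or supply the missing indistinguishability-despite-fairness and irrevocability arguments, which your own sketch flags as the delicate part.
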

\begin{sketch}
  We show an instance that is only solvable with NFD in SEQ in the Appendix.
\end{sketch}

\subsection{Necessary Condition}
\label{sec:necessary-condition}

\begin{theorem}
  Regardless of execution models and failure detector types, two conditions are necessary for instances to contain solutions.
  \begin{itemize}
  \item No use of other goals: For each agent $i$, there exists a path in $G$ from $s_i$ to $g_i$ that does not include any  $g_j$, for all $j \in A, j \neq i$.
  \item Limitation of other starts: For each agent $i$, for each $B \subset A$ where $i \not\in B$ and $|B| = f$, there exists a path in $G$ from $s_i$ to $g_i$ that does not include $s_j$, for all $j \in B$.
  \end{itemize}
  \label{thrm:necessary}
\end{theorem}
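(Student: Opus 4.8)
The plan is to prove the contrapositive for each condition: if the condition fails, the instance admits no solution in any model. Since the preceding theorems show that NFD is weakly stronger than AFD under a fixed execution model, and that SYN is stronger than SEQ under a fixed detector, the pairing of SYN with NFD is weakly stronger than all three other models. Hence it suffices to exhibit, for a violating instance, a crash pattern of at most $f$ crashes that defeats \emph{every} candidate solution in SYN with NFD; unsolvability in this strongest model then propagates downward to the three weaker models. I would read each condition as a vertex-connectivity statement: the ``other goals'' condition fails for $i$ exactly when deleting $\{g_j : j \neq i\}$ separates $s_i$ from $g_i$ in $G$, and the ``other starts'' condition fails for $i, B$ exactly when deleting $\{s_j : j \in B\}$ separates $s_i$ from $g_i$.

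For the limitation-of-other-starts condition the construction is immediate. Suppose deleting $\{s_j : j \in B\}$, with $|B| = f$ and $i \notin B$, disconnects $s_i$ from $g_i$. The adversary crashes every agent in $B$ at time zero, while each still occupies its start $s_j$. This uses exactly $f$ crashes, and the vertices $\{s_j : j \in B\}$ are blocked forever. Since they form an $s_i$--$g_i$ separator, the correct agent $i$ can never physically reach $g_i$ along any walk in $G$, irrespective of which paths its plan prescribes or what its detector reports. Thus $i$ fails to reach its goal and no solution exists.

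The no-use-of-other-goals condition is the delicate case, and its key idea is that the crash dooms a \emph{different} agent than the one named in the condition. Suppose every $s_i$--$g_i$ path in $G$ meets some goal $g_j$, $j \neq i$. Take any candidate solution and run it with no crashes (a legal execution, since using fewer than $f$ crashes is permitted). The guarantee forces $i$ to reach $g_i$, so the vertices $i$ visits form a walk from $s_i$ to $g_i$; contracting its cycles yields a simple $s_i$--$g_i$ path in $G$, which by assumption contains some goal $g_j$. Hence $i$ actually occupies $g_j$ at some timestep $t^\star$; at that instant $j$ is absent from $g_j$ (else $i$ and $j$ collide), and since an agent never leaves its goal once reached, $j$ has not yet arrived. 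Now modify the execution to run identically up to $t^\star$ and then crash $i$ on $g_j$ — a single crash, within budget. From $t^\star$ onward $g_j$ is permanently occupied by the crashed $i$, yet the correct agent $j$ must terminate on its unique goal $g_j$ and has nowhere else to go. So $j$ never reaches its goal, contradicting the solution guarantee.

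The main obstacle is precisely this reversal in the second part: the condition is stated as a routing requirement on agent $i$, but the failure it forbids is a collision-by-crash in which $i$, forced onto a foreign goal, is sacrificed to strand that goal's owner. Making this rigorous requires the walk-to-path contraction (to locate a goal $i$ genuinely visits) together with the observation that such a goal is necessarily vacant at the moment $i$ stands on it, so crashing there traps its rightful owner forever. The start condition, by contrast, is a straightforward blocking argument once the $f$ start-occupants are crashed in place.
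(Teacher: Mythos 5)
Your proposal is correct and follows essentially the same approach as the paper: for the starts condition, crash all $f$ agents of $B$ in place at their starts so that the separator blocks $i$ forever, and for the goals condition, force $i$ onto some foreign goal $g_j$ in a crash-free run and then crash it there to strand $j$. Your walk-to-path contraction and the replay-and-crash execution merely make rigorous what the paper's terse two-sentence proof leaves implicit, and the detour through the model-power lattice is harmless but unnecessary, since both crash constructions apply verbatim in every execution model and detector type.
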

\begin{proof}
  \emph{No use of other goals}:
  Suppose that there exists an agent $i$ that needs to pass through one of the goals $g_j$. If $i$ crashes at $g_j$, then $j$ cannot reach $g_j$.
  \emph{Limitation of other starts}:
  Suppose that there exists an agent $i$ that needs to pass through one of the starts of $B$. If all agents in $B$ are crashed at their starts, then $i$ cannot reach $g_i$.
\end{proof}

When $f = |A|-1$, the conditions in \cref{thrm:necessary} are equivalent to a \emph{well-formed instance}~\cite{vcap2015prioritized} for MAPF; an instance such that every agent has at least one path that uses no others' start and goal vertices.
Our condition slightly differs in the limitation of starts because agents can change their behavior at runtime, according to failure detectors.

\section{Computational Complexity}
\label{sec:complexity}

This section discusses the complexity of MAPPCF, namely, studying two questions: the difficulty of finding solutions and that of verifying solutions.
The primary result is that both problems are computationally intractable; the former is NP-hard and the latter is co-NP-complete.
Both proofs are based on reductions of the 3-SAT problem, determining the satisfiability of a formula in conjunctive normal form with three literals in each clause.
The proof of verification appears in the Appendix.

{
  \newcommand{\edgesizeA}{1.3}  
  \newcommand{\edgesizeB}{0.8}  
  \newcommand{\edgesizeC}{1.3}  
  \newcommand{\edgesizeD}{0.5}  
  \newcommand{\edgesizeE}{1.4}  
  \begin{figure}[tb!]
    \centering
    \scriptsize
    \begin{tikzpicture}
      \begin{scope}[shift={(0, 0)}]
        %
        \node[start-vertex,blue,text=white](v1-s) at (0, 0) {$x$};
        \node[goal-vertex,right=\edgesizeA of v1-s.center,draw=blue,thick](v1-g) {$x$};
        \node[start-vertex,right=\edgesizeB of v1-g.center](v2-s) {$y$};
        \node[goal-vertex,right=\edgesizeA of v2-s.center](v2-g) {$y$};
        \node[start-vertex,right=\edgesizeB of v2-g.center](v3-s) {$z$};
        \node[goal-vertex,right=\edgesizeA of v3-s.center](v3-g) {$z$};
        %
        \node[start-vertex,above=\edgesizeC of v2-s.center,label=above:{$x \lor y \lor \lnot z$},brown,text=white]
        (c1-s) {$C^1$};
        \node[start-vertex,above=\edgesizeC of v2-g.center,label=above:{$\lnot x \lor y \lor z$}]
        (c2-s) {$C^2$};
        \node[goal-vertex,below=\edgesizeC of v2-s.center,draw=brown,thick](c1-g) {$C^1$};
        \node[goal-vertex,below=\edgesizeC of v2-g.center](c2-g) {$C^2$};
        %
        \node[vertex,above=\edgesizeD of v1-s.center,label=left:{true}](v1-u1) {};
        \node[vertex,below=\edgesizeD of v1-s.center,label=left:{false}](v1-b1) {};
        \node[vertex,above=\edgesizeD of v2-s.center](v2-u1) {};
        \node[vertex,below=\edgesizeD of v2-s.center](v2-b1) {};
        \node[vertex,above=\edgesizeD of v3-s.center](v3-u1) {};
        \node[vertex,below=\edgesizeD of v3-s.center](v3-b1) {};
        \node[vertex](v1-b2) at ($(v1-b1)+(0.5,0)$) {};
        \node[vertex](v1-u2) at ($(v1-u1)+(1.0,0)$) {};
        \node[vertex](v2-b2) at ($(v2-b1)+(0.5,0)$) {};
        \node[vertex](v2-b3) at ($(v2-b1)+(1.0,0)$) {};
        \node[vertex](v3-u2) at ($(v3-u1)+(0.5,0)$) {};
        \node[vertex](v3-b2) at ($(v3-b1)+(1.0,0)$) {};
        %
        \node[vertex,above=\edgesizeE of v1-b2](c1-1) {};
        \node[vertex,above=\edgesizeE of v2-b2](c1-2) {};
        \node[vertex,above=0.25 of v1-u2](c2-1) {};
        \node[vertex,above=\edgesizeE of v2-b3](c2-2) {};
        \node[vertex,above=0.25 of v3-u2](c3-1) {};
        \node[vertex,above=\edgesizeE of v3-b2](c3-2) {};
        %
        \foreach \u / \v in {v2-s/v2-u1,v2-s/v2-b1,v2-b1/v2-b2,v2-b2/v2-b3,
        v3-s/v3-u1,v3-s/v3-b1,v3-u1/v3-u2,v3-b1/v3-b2}
        \draw[line,->](\u)--(\v);
        \foreach \u / \v in {v1-s/v1-u1,v1-s/v1-b1,v1-b1/v1-b2,v1-u1/v1-u2}
        \draw[line,->,blue,thick](\u)--(\v);
        \foreach \u / \v in {v2-u1/v2-g,v2-b3/v2-g,v3-u2/v3-g,v3-b2/v3-g} \draw[line,->](\u)-|(\v);
        \foreach \u / \v in {v1-u2/v1-g,v1-b2/v1-g} \draw[line,->,blue,thick](\u)-|(\v);
        %
        \foreach \u / \v in {
          c1-s/c1-1,c1-1/v1-b2,v1-b2/c1-g,
          c1-s/c1-2,c1-2/v2-b2,v2-b2/c1-g,
          c1-s/c3-1,c3-1/v3-u2}
        \draw[line,->,brown,thick](\u)--(\v);
        \draw[line,->,brown,thick](v3-u2)--($(v3-u2)+(0,-1.3)$)--(c1-g);
        %
        \foreach \u / \v in {
          c2-s/c2-1,c2-1/v1-u2,
          c2-s/c2-2,c2-2/v2-b3,v2-b3/c2-g,
          c2-s/c3-2,c3-2/v3-b2,v3-b2/c2-g}
        \draw[line,->](\u)--(\v);
        \draw[line,->](v1-u2)--($(v1-u2)+(0,-1.3)$)--(c2-g);
      \end{scope}
      \begin{scope}[shift={(5.5, -1.4)}]
        \node[start-vertex, label=right:{start}](label-s) at (0, 0){};
        \node[goal-vertex, label=right:{goal}](label-g) at (1.1, 0){};
      \end{scope}
    \end{tikzpicture}
    \caption{
      MAPPCF instance on a directed graph in SEQ reduced from the SAT instance $(x \lor y \lor \lnot z) \land (\lnot x \lor y \lor z)$.
    }
    \label{fig:np-hard-directed}
  \end{figure}
}
{
  \newcommand{\edgesizeA}{0.6}
  \newcommand{\edgesizeB}{0.8}  
  \newcommand{\edgesizeC}{0.3}  
  \newcommand{\edgesizeD}{1.5}
  \newcommand{\edgesizeE}{0.7}  
  \newcommand{\edgesizeF}{0.4}  
  \begin{figure}[tb!]
    \centering
    \scriptsize
    \begin{tikzpicture}
      \begin{scope}[shift={(0, 0)}]
        \node[start-vertex](v1) at (0, 0) {$i$};
        \node[vertex,right=\edgesizeA of v1.center](v2) {};
        \node[vertex,right=\edgesizeA of v2.center,label=below right:{$u$}](v3) {};
        \node[start-vertex,above=\edgesizeF of v2.center,teal,text=white](v4) {$\alpha$};
        \node[goal-vertex,below=\edgesizeF of v2.center,draw=teal,thick](v5) {$\alpha$};
        \foreach \u / \v in {v1/v2,v2/v3,v2/v4,v2/v5,v1/v4,v1/v5}
        \draw[line] (\u) -- (\v);
        \draw[line,densely dotted,thick](v3)--($(v3)+(0.4,0)$);
        \node[] at ($(v5.center)+(0,-0.5)$) {(b)~SEQ};
      \end{scope}
      \begin{scope}[shift={(2.3, 0)}]
        \node[start-vertex](v1) at (0, 0) {$i$};
        \node[vertex,right=\edgesizeA of v1.center](v2) {};
        \node[goal-vertex,above=\edgesizeF of v2.center,thick,draw=blue](v3) {$\beta$};
        \node[goal-vertex,below=\edgesizeF of v2.center,thick,draw=orange](v4) {$\gamma$};
        \node[start-vertex,orange,text=white](v5) at ($(v3.center)+(\edgesizeB,-\edgesizeC)$) {$\gamma$};
        \node[start-vertex,blue,text=white](v6) at ($(v4.center)+(\edgesizeB,\edgesizeC)$) {$\beta$};
        \node[vertex,right=\edgesizeD of v2.center,label=below right:{$u$}](v7) {};
        \foreach \u / \v in {v1/v2,v2/v3,v2/v4,v3/v5,v4/v6,v2/v5,v2/v6,v5/v6,v5/v7,v6/v7}
        \draw[line] (\u) -- (\v);
        \foreach \u / \v in {v3/v7,v4/v7}
        \draw[line] (\u) -| (\v);
        \draw[line,densely dotted,thick](v7)--($(v7)+(0.4,0)$);
        \node[] at ($(v4.center)+(0.7,-0.5)$) {(c)~SYN};
      \end{scope}
      {
        \begin{scope}[shift={(-1.8, 0)}]
          \node[start-vertex](v1) at (0, 0) {$x$};
          \node[vertex,above=\edgesizeE of v1.center](v2) {};
          \node[vertex,below=\edgesizeE of v1.center](v3) {};
          \foreach \u / \v in {v1/v2,v1/v3} \draw[line] (\u) -- (\v);
          %
          \draw[line,fill=white]
          ($(v1)+(-0.2,0.3)$)--
          ($(v1)+(0.2,0.3)$)--
          ($(v1)+(0,0.6)$)--cycle;
          \draw[line]($(v1)+(-0.2,0.6)$)--($(v1)+(0.2,0.6)$);
          \draw[line,fill=white]
          ($(v1)+(-0.2,-0.3)$)--
          ($(v1)+(0.2,-0.3)$)--
          ($(v1)+(0,-0.6)$)--cycle;
          \draw[line]($(v1)+(-0.2,-0.6)$)--($(v1)+(0.2,-0.6)$);
          \draw[line](v2)--($(v2)+(0.6,0)$);
          \draw[line](v3)--($(v3)+(0.6,0)$);
          \node[rotate=90] at ($(v2)+(0.7,0)$) {$\sim$};
          \node[rotate=90] at ($(v2)+(0.78,0)$) {$\sim$};
          \node[rotate=90] at ($(v3)+(0.7,0)$) {$\sim$};
          \node[rotate=90] at ($(v3)+(0.78,0)$) {$\sim$};
          \node[anchor=west] at (0.2, 0.50) {diode};
          \node[anchor=west] at (0.2, 0.30) {gadget};
          \node[] at (0.3,-1.12) {(a) usage};
        \end{scope}
      }
    \end{tikzpicture}
    \caption{Diode gadgets.}
    \label{fig:diode}
  \end{figure}
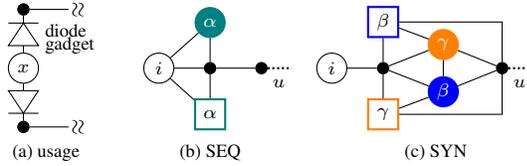
}

\begin{theorem}
  MAPPCF is NP-hard regardless of models.
\end{theorem}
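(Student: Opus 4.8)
The plan is to prove NP-hardness by a polynomial-time reduction from 3-SAT, and the figures already present in the excerpt (Figure~\ref{fig:np-hard-directed} and the diode gadgets of Figure~\ref{fig:diode}) strongly suggest the intended construction. Given a 3-CNF formula $\phi$ with variables $x_1,\dots,x_m$ and clauses $C^1,\dots,C^k$, I would build an MAPPCF instance in which one agent per variable encodes a truth assignment, and one agent per clause must reach its goal iff the clause is satisfied. The key idea, visible in the reduction figure, is that each variable agent's \emph{choice} of primary path (upper ``true'' branch versus lower ``false'' branch) commits a truth value, and the transition rules of the clause agents let them route through a branch only when a corresponding variable agent has selected the matching literal.

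First I would fix the model to argue about: since Theorem~2 (SYN strictly stronger than SEQ) and Theorem~1 (NFD weakly stronger than AFD) establish the power ordering, it suffices to prove hardness in a single convenient model and then lift it across all models. The cleanest route is to present the reduction for the weakest natural setting, namely SEQ with AFD (the directed construction in Figure~\ref{fig:np-hard-directed} is drawn for SEQ), and argue that the ``diode'' gadgets of Figure~\ref{fig:diode} let me enforce one-directional passage and simulate the same forcing behavior in SYN and in the undirected case. I would set the crash budget $f$ to a small constant (likely $f=1$), so that a single adversarial crash is exactly what forces each variable agent to have a consistent, committed escape route — this is what ties the backup-path structure to a Boolean assignment.

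The core of the argument is the two-way correspondence. For the forward direction I would show that a satisfying assignment yields a solution: set each variable agent's primary path to its assigned branch, and give each clause agent transition rules that, upon detecting (via the failure detector) a crash blocking one literal branch, reroutes it through a satisfied literal; because the assignment satisfies every clause, at least one literal branch is always open, so every clause agent reaches its goal regardless of the (bounded) crash pattern, and the necessary conditions of Theorem~\ref{thrm:necessary} (no use of other goals, limitation of other starts) are respected by construction. For the reverse direction I would show that any valid solution induces a satisfying assignment: the disjointness forced by collision/deadlock-freedom together with the crash adversary means each variable agent can reliably commit to only one branch, reading off a truth value, and a clause agent's guaranteed arrival under all crash patterns forces at least one of its three literal branches to coincide with the committed value of the corresponding variable — i.e., the clause is satisfied.

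The main obstacle I anticipate is the reverse direction, specifically proving that the crash adversary genuinely \emph{forces} a single globally consistent truth value per variable rather than letting a variable agent opportunistically use both branches across different crash scenarios. I would handle this by arguing that because a solution must succeed for \emph{every} crash pattern of size up to $f$ simultaneously (the plan is fixed offline), an adversary can crash precisely the agent occupying the branch that a clause agent would otherwise exploit, so the clause agent's rules cannot depend on a variable agent ``being in two places''; the diode gadgets and the sequential schedule (which the planner cannot control) are exactly what prevent the variable agent from hedging. A secondary technical point is verifying that the reduction is polynomial in $|\phi|$ and that the gadget wiring introduces no spurious solutions that satisfy the MAPPCF instance without satisfying $\phi$; I would discharge this by a careful case analysis on which gadget edges any correct agent can traverse under the AFD observations.
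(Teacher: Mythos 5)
Your reduction skeleton (3-SAT, one agent per variable choosing an upper/lower branch, one agent per clause, diode gadgets for the undirected case, constant $f$) matches the paper's construction in \cref{fig:np-hard-directed}, but two steps would fail as written. First, your claim that it suffices to prove hardness in SEQ+AFD and then \emph{lift} it across all models via the power theorems is invalid: weak/strict strength only gives an inclusion between the sets of solvable instances, so from ``$\phi$ satisfiable $\iff$ $I$ solvable in SEQ'' you obtain only the forward implication for SYN; and since SYN is \emph{strictly} stronger, there genuinely exist instances solvable in SYN but not in SEQ, so the backward implication can break. The paper does not lift; it argues the construction itself is model-insensitive: in the reduced directed instance, failure detection is useless --- by the time a crash is detectable (the detector has radius one), the directed edges leave no detour, and nothing informative is ever adjacent at a branching point --- so every plan collapses to a single committed path in \emph{every} model, and solvability coincides with the existence of pairwise vertex-disjoint paths, which encodes satisfiability. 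The diode gadgets (which, note, differ between SEQ and SYN, \cref{fig:diode}b--c) are then proved correct by separate per-model arguments in the Appendix, not by appeal to the power lattice.

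Second, your forward direction is internally inconsistent with your reverse direction. You construct solutions in which clause agents use transition rules to ``reroute through a satisfied literal upon detecting a crash,'' yet your reverse direction requires that no agent can usefully branch --- otherwise disjointness is not forced and no well-defined assignment can be read off. In the paper both directions are resolved the same way: rerouting is impossible by construction, so in the satisfiable direction the solution is purely static --- each variable agent takes the branch matching its truth value, and each clause agent takes the single literal path whose shared vertex lies on the branch its variable agent vacated, yielding pairwise vertex-disjoint paths with no transition rules at all --- while in the reverse direction a single crash ($f=1$ suffices) placed at any shared vertex before the other agent passes blocks that agent forever, forcing disjointness. You correctly anticipated the hedging obstacle, but your proposed fix (the adversary crashes whichever agent a clause agent ``would otherwise exploit'') does not close it; the missing ingredient is the structural argument that detection radius one combined with directedness (or, on undirected graphs, the diode gadget) renders transition rules vacuous, so that plans provably degenerate to single paths.
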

\begin{proof}
  We first prove that MAPPCF on digraphs in SEQ is NP-hard.
  The proof is done by reduction of the SAT problem and works regardless of failure detector types.
  Throughout the proof, we use the following example:
  $(x \lor y \lor \lnot z) \land (\lnot x \lor y \lor z)$.
  The reduction is depicted in \cref{fig:np-hard-directed}.

  \smallskip
  \noindent
  \emph{A. Construction of an MAPPCF Instance:}
  We first introduce a \emph{variable agent} for each variable $x_i$.
  In \cref{fig:np-hard-directed}, we highlight the corresponding agent of the variable $x$ as blue-colored.
  The reduced instance has two paths for each variable agent: \emph{upper} or \emph{lower} paths.
  Both paths include at least one vertex (just above/below of the start in \cref{fig:np-hard-directed}) and additional vertices depending on clauses of the formula.

  We next introduce a \emph{clause} agent for each clause $C^j$ of the formula.
  Each clause agent has multiple paths, corresponding to each literal in the clause.
  Those paths contain two vertices excluding the start and the goal: one vertex unique to each literal, and another vertex shared with the corresponding variable agent.
  The shared vertex is located on the lower (or upper) path of the variable agent when the literal is positive (resp. negative).
  In \cref{fig:np-hard-directed}, we highlight the corresponding agent of the clause $C^1 =  x \lor y \lor \lnot z$ as brown-colored.
  The translation from the formula into an MAPPCF instance is clearly done in polynomial time.

  \smallskip
  \noindent
  \emph{B. MAPPCF has a solution if the formula is satisfiable:}
  Given a satisfiable assignment, a solution of MAPPCF is built as follows.
  When a variable is assigned true (or false), let the corresponding agent takes the upper (resp. lower) path.
  Each clause agent then has at least one path (vertex) disjoint with any variable agent; otherwise, the clause is unsatisfied.
  Let the clause agent take this path;
  these paths constitute a solution because all paths are disjoint.

  \smallskip
  \noindent
  \emph{C. The formula is satisfiable if MAPPCF has a solution:}
  In every solution, a plan for each agent inherently consists of a single path, due to the instance construction.
  These paths should be (vertex) disjoint; otherwise, the crash of one agent blocks another from reaching its goal.
  Then, build an assignment as follows.
  Assign a variable true (or false) when the corresponding variable agent uses the upper (resp. lower) path.
  This assignment is satisfiable because it ensures at least one literal is satisfied in all clauses.

  \smallskip
  \noindent
  \emph{D. Extending the reduction to undirected graphs:}
  The aforementioned proof is extended to the undirected case by introducing a \emph{diode} gadget to the starts of every agent, as partially shown in \cref{fig:diode}a.
  This gadget prevents back to the start once the agent passes through the gadget (i.e., reaching vertex $u$ in \cref{fig:diode}b--c).
  Therefore, the same proof procedure (i.e., finding disjoint paths) is applied to other models.
  The gadget for SEQ and SYN are shown in \cref{fig:diode}b--c, respectively.
  The proofs of their properties are delivered in the Appendix.
\end{proof}

\begin{theorem}
  Verifying a solution of MAPPCF is co-NP-complete regardless of models.
  \label{thrm:verification}
\end{theorem}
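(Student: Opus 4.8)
The plan is to prove the statement in two halves: first that verification lies in co-NP, and then that it is co-NP-hard, the latter by reducing 3-SAT to the \emph{complement} of verification (``does a crash pattern exist under which the candidate fails?''). If that complement is NP-hard and verification is in co-NP, the claimed co-NP-completeness follows. I would carry out the membership argument first, since it also fixes the vocabulary (failing crash patterns, execution prefixes) used in the hardness reduction.

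\textbf{Membership in co-NP.} I would show the complement is in NP by exhibiting a polynomial failure certificate. A candidate is \emph{not} a solution exactly when there is a crash pattern of at most $f$ crashes (and, in SEQ, an execution schedule) under which some correct agent collides (SYN) or never reaches its goal (a blockage or deadlock, in either model). The certificate is the list of crashing agents with the vertex and time at which each crashes, plus, for SEQ, a finite schedule prefix. The key point is that every execution quiesces after polynomially many effective moves: a plan is a finite, acyclically structured set of paths (as in \cref{fig:example}), so each agent's executing-path transitions always descend this finite structure and hence number at most the size of the plan, while within one path the progress index is monotone. Thus the total number of moves is bounded by $n$ times the plan size. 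A failure is then witnessed by a collision occurring in the simulated prefix, or by a quiescent configuration in which a correct agent sits off its goal together with a local deadlock/crash-blockage check certifying it can never progress; all of this is verifiable in polynomial time.

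\textbf{co-NP-hardness.} I would reduce 3-SAT to the complement. Reusing the directed SEQ gadgetry of the NP-hardness proof and lifting to SYN and to undirected graphs by the same diode gadgets of \cref{fig:diode}, I would set $f=n$ and let a crash pattern encode a truth assignment: for each variable $x_i$ add an agent that can be crashed at its start on a ``true'' vertex $T_i$ or a ``false'' vertex $F_i$, so crashing it records $x_i$'s value, whereas a correct such agent immediately departs down a private corridor meeting no one else. I would add one \emph{witness} agent $t$ with one route per clause, the route for $C_j$ being a corridor threaded \emph{in series} through the three vertices that become occupied precisely when $C_j$'s literals are true. Series threading makes a corridor impassable iff some literal of $C_j$ is true, i.e. iff $C_j$ is satisfied, so $t$ can reach its goal iff some clause is left entirely false, i.e. iff the encoded assignment does \emph{not} satisfy $\phi$. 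Hence a failing crash pattern exists iff $\phi$ is satisfiable; read contrapositively, the candidate is a valid solution iff $\phi$ is unsatisfiable.

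\textbf{Main obstacle.} The delicate step is making $t$'s plan genuinely robust, so both directions are tight. Because the failure detector is purely local, $t$ cannot know in advance which corridor is fully open, so I would give it an explore-and-backtrack plan (enter a corridor, and upon sensing a crash on the vertex ahead retreat and transition to the next corridor), and argue that since crashes are static and the only other agents are the fast-vacating variable agents, $t$ reaches its goal under \emph{every} schedule whenever any corridor is open; this is exactly what guarantees that unsatisfiability yields a valid candidate rather than a spuriously failing one. The two accompanying hazards are (i)~off-encoding crash patterns or adversarial SEQ schedules producing failures unrelated to satisfaction, which I would rule out by keeping all corridors and private corridors vertex-disjoint and inserting diodes so that no swap or cyclic wait between $t$ and a variable agent can arise, and (ii)~variables occurring in several clauses, where one crash must block every corridor of that literal; this I would handle with the standard consistency gadget that funnels all occurrences of a literal through a single crashable gate while preventing $t$ from splicing fragments of distinct corridors into an unintended goal-reaching path. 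I expect this robustness-plus-consistency design to be the crux of the argument.
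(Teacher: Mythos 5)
Your overall strategy is the same as the paper's (membership via a polynomial-size failing certificate, hardness via a SAT reduction building an instance \emph{together with} candidate plans that form a solution iff the formula is unsatisfiable, with crash patterns encoding truth assignments), but your gadget is genuinely different. The paper (see \cref{fig:verification}) has no dedicated witness agent: each clause agent carries a four-step primary plan threaded through a start or goal of a variable agent, a private rest vertex, and one of $l-1$ shared \emph{bottleneck} vertices, plus a one-step contingency plan; failure arises by pigeonhole when \emph{all} $l$ clause agents can execute their primary plans (i.e., every clause is satisfied), since $l-1$ of them can crash on the bottleneck vertices and strand the last. This makes the satisfiable-implies-failure direction almost immediate and, crucially, sidesteps your two acknowledged hazards: literal consistency comes for free because each variable agent's start/goal is itself the single shared gate for all occurrences of that literal, and no robustness argument for an explore-and-backtrack plan is needed because the clause agents' plans have just one branch point (first step blocked $\Rightarrow$ contingency). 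The price the paper pays is the \emph{observation edges}, needed so a clause agent can test whether a variable agent on its start is correct before stepping onto that agent's goal (otherwise the construction would collide with the no-use-of-other-goals condition of \cref{thrm:necessary}); your literal-gate design avoids other agents' goals entirely, which is cleaner on that point, but it then genuinely does need the consistency funnel and a proof that the witness reaches its goal under \emph{every} schedule whenever some corridor is open --- you correctly identify this as the crux, and it is real work the paper's pigeonhole construction never has to do. Two smaller remarks: your co-NP membership argument is more explicit than the paper's one-line assertion, but your polynomial bound on execution length silently assumes the plan's transition structure is acyclic (tree-like, as in \cref{fig:example}); the paper makes the same implicit assumption, so this is not a gap relative to it, though it deserves a sentence. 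And your worry about the witness ``splicing'' corridor fragments is largely vacuous, since in the reduction you author the candidate plans yourself and the agent can only follow the transition rules you write down.
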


Although MAPPCF is intractable in general, it is notable that the difficulties might be relaxed in certain classes, e.g., instances on planner graphs or when $f=1$.

\section{Solving MAPPCF}
\label{sec:solver}
We next discuss how to solve MAPPCF.
The main challenge is how to manage crash awareness differences among agents.
For instance, agent $i$ may observe a crash of agent $j$ while at a neighboring position, and change its path accordingly.
However, another correct agent $k$, located further away from $j$, might not be aware that $j$ has crashed.
To preserve safety, a plan of $k$ requires avoiding collisions and deadlocks with both before-after paths of $i$ (that is, regardless of the crash of $j$).
Any planning algorithm solving MAPPCF thus has to cope with different awareness of crash patterns.

The proposed method, \emph{decoupled crash faults resolution framework (\algoname)}, returns an MAPPCF solution.
Herein, we consider only NFD, but \algoname is also applicable to AFD.

\subsection{Framework Description}

\Cref{algo:planner} presents \algoname.
\Cref{fig:planner-viz} illustrates a running example in SYN.
We first describe \algoname in SYN using this example, followed by detailed parts of the implementation.

{
  \newcommand{\U}{\m{\mathcal{U}}}
  \newcommand{\I}{\m{\mathcal{I}}}
  \renewcommand{\P}{\m{\mathcal{P}}}
  \renewcommand{\E}{\m{\mathcal{E}}}
  \begin{algorithm}[t!]
    \caption{\algoname}
    \label{algo:planner}
    \begin{algorithmic}[1]
    \item[\textbf{input}:~instance \I;\;\;\textbf{output}:~solution \P or \FAILURE]
      \State $\P \leftarrow \funcname{get\_initial\_plans(\I)}$
      \label{algo:planner:init}
      \State $\U \leftarrow \funcname{get\_initial\_unresolve\_events}(\I, \P)$
      \label{algo:planner:init-unresolved}
      \While{$\U \neq \emptyset$}
      \label{algo:planner:while}
      \State $e \leftarrow \U.\funcname{pop}()$
      \Comment{event; pair of crash \& effect}
      \State $\pi \leftarrow \funcname{find\_backup\_path}(\I, \P, e)$
      \label{algo:planner:backup}
      \IFSINGLE{$\pi$ not found}{\Return \FAILURE}
      \State $\U.\funcname{push}(\funcname{get\_new\_unresolved\_events}(\I, \P, \pi))$
      \State update $\P$ with $\pi$
      \EndWhile
      \label{algo:planner:end-while}
      \State \Return \P
      \label{algo:planner:return}
    \end{algorithmic}
  \end{algorithm}
}

\paragraph{Finding Initial Plans}
DCRF first obtains an initial plan for each agent (i.e., a path) [\cref{algo:planner:init}].
This process is equivalent to solving MAPF with existing solvers.
In the example, the initial plans for agents $i$, $j$, and $k$ are respectively depicted in \cref{fig:planner-viz}d, \ref{fig:planner-viz}b, and \ref{fig:planner-viz}c.

\paragraph{Identifying Unresolved Events}
DCRF next identifies \emph{unresolved events}.
An event is a pair of \emph{crash} (i.e., who crashes where and when), and \emph{effect} (i.e., whose path is affected where and when).
Finding unresolved events is done by finding shared vertices in a set of paths.
In the example (\cref{fig:planner-viz}d), the initial plans contain two unresolved events:
\begin{itemize}
\item $e^1$: $\P_i$ cannot use $\langle v_2; t=2\rangle$ if $j$ is crashed at $\langle v_2; t=1\rangle$
\item $e^2$: $\P_i$ cannot use $\langle v_3; t=3\rangle$ if $k$ is crashed at $\langle v_3; t=2\rangle$
\end{itemize}
These events should be resolved by preparing \emph{backup paths}.
\algoname resolves events in a decoupled manner as follows.

{
  \newcommand{\edgesizeA}{0.8}
  \newcommand{\drawG}{
    \node[vertex](v1) at (0, 0) {};
    \node[vertex](v2) at ($(v1.center)+(\edgesizeA,0)$) {};
    \node[vertex](v3) at ($(v1.center)+(2*\edgesizeA,0)$) {};
    \node[vertex](v4) at ($(v1.center)+(3*\edgesizeA,0)$) {};
    \node[vertex](v5) at ($(v1.center)+(\edgesizeA,-\edgesizeA)$) {};
    \node[vertex](v6) at ($(v1.center)+(2*\edgesizeA,-\edgesizeA)$) {};
    \node[vertex](v7) at ($(v1.center)+(2*\edgesizeA,\edgesizeA)$) {};
    \foreach \u / \v in {v1/v2,v2/v3,v3/v4,v2/v5,v3/v6,v5/v6,v1/v5,v6/v4,v7/v3,v7/v4,v2/v7,v5/v3}
    \draw[line] (\u) -- (\v);
  }
  \newcommand{\cross}{$\mathbin{\tikz [x=1.4ex,y=1.4ex,line width=.2ex, red] \draw (0,0) -- (1,1) (0,1) -- (1,0);}$}
  \begin{figure}[tb!]
    \centering
    \small
    \begin{tikzpicture}
      \begin{scope}[shift={(0, 0)}]
        \drawG
        \node[start-vertex,blue,text=white,label=above:{$v_1$}] at (v1) {$i$};
        \node[goal-vertex,draw=blue,thick,label=above:{$v_4$}] at (v4) {$i$};
        \node[start-vertex,orange,text=white,label=above:{$v_2$}] at (v2) {$j$};
        \node[goal-vertex,draw=orange,thick,label=below:{$v_5$}] at (v5) {$j$};
        \node[start-vertex,teal,text=white,label=above:{$v_7$}] at (v7) {$k$};
        \node[goal-vertex,draw=teal,thick,label=below:{$v_6$}] at (v6) {$k$};
        \node[above right=-0.1 of v3] {$v_3$};
        \node[above=0.5 of v1] {(a)};
      \end{scope}
      \begin{scope}[shift={(0, -2.8)}]
        \drawG
        \draw[line,->,blue,thick, densely dotted]($(v1)+(0.05,-0.05)$)--($(v4)+(-0.1,-0.05)$);
        \draw[line,->,teal,thick, densely dotted]($(v7)+(0.05,-0.05)$)--($(v6)+(0.05,0.1)$);
        \node[start-vertex,orange,text=white](j-s) at (v2) {$j$};
        \node[goal-vertex,draw=orange,thick](j-g) at (v5) {$j$};
        \draw[line,->,very thick,orange](j-s)--(j-g);
        \node[fit=(v1)(v4)(v7)(v5), draw=lightgray,inner sep=4pt,label=above:{$\P_j$}] {};
        \node[above=0.5 of v1] {(b)};
      \end{scope}
      \begin{scope}[shift={(0, -5.6)}]
        \drawG
        \draw[line,->,blue,thick, densely dotted]($(v1)+(0.05,-0.05)$)--($(v4)+(-0.1,-0.05)$);
        \draw[line,->,orange,thick, densely dotted]($(v2)+(0.05,-0.05)$)--($(v5)+(0.05,0.1)$);
        \node[start-vertex,teal,text=white](k-s) at (v7) {$k$};
        \node[goal-vertex,draw=teal,thick](k-g) at (v6) {$k$};
        \draw[line,->,very thick,teal](k-s)--(k-g);
        \node[fit=(v1)(v4)(v7)(v5), draw=lightgray,inner sep=4pt,label=above:{$\P_k$}] {};
        \node[above=0.5 of v1] {(c)};
      \end{scope}
      \begin{scope}[shift={(4.2, 0)}]
        \drawG
        \node[start-vertex,blue,text=white](i-s) at (v1) {$i$};
        \node[goal-vertex,draw=blue,thick](i-g) at (v4) {$i$};
        \draw[line,->,very thick,blue](i-s)--(i-g);
        \draw[line,->,orange,thick, densely dotted]($(v2)+(0.05,-0.05)$)--($(v5)+(0.05,0.1)$);
        \draw[line,->,teal,thick, densely dotted]($(v7)+(0.05,-0.05)$)--($(v6)+(0.05,0.1)$);
        \node[above=0.5 of v1] {(d)};
      \end{scope}
      \begin{scope}[shift={(3.2, -2.8)}]
        \drawG
        \draw[line,->,teal,thick, densely dotted]($(v7)+(0.05,-0.05)$)--($(v6)+(0.05,0.1)$);
        \node[start-vertex,blue,text=white](i-s) at (v1) {$i$};
        \node[goal-vertex,draw=blue,thick](i-g) at (v4) {$i$};
        \node[start-vertex,orange,text=white](j-s) at (v2) {$j$};
        \node[above left=-0.2 of j-s] {\cross};
        \draw[line,->,very thick,blue](i-s)--(v5)--(v3)--(i-g);
        \node[above=0.5 of v1] {(e)};
      \end{scope}
      \begin{scope}[shift={(5.4, -4.0)}]
        \drawG
        \draw[line,->,orange,thick, densely dotted]($(v2)+(0.05,-0.05)$)--($(v5)+(0.05,0.1)$);
        \node[start-vertex,blue,text=white](i-s) at (v2) {$i$};
        \node[goal-vertex,draw=blue,thick](i-g) at (v4) {$i$};
        \node[start-vertex,teal,text=white](k-s) at (v3) {$k$};
        \node[above left=-0.2 of k-s] {\cross};
        \draw[line,->,very thick,blue](i-s)--(v7)--(i-g);
        \node[above=0.1 of v1] {(g)};
      \end{scope}
      \begin{scope}[shift={(3.2, -5.6)}]
        \drawG
        \node[start-vertex,blue,text=white](i-s) at (v5) {$i$};
        \node[goal-vertex,draw=blue,thick](i-g) at (v4) {$i$};
        \node[start-vertex,orange,text=white](j-s) at (v2) {$j$};
        \node[above left=-0.2 of j-s] {\cross};
        \node[start-vertex,teal,text=white](k-s) at (v3) {$k$};
        \node[above left=-0.2 of k-s] {\cross};
        \draw[line,->,very thick,blue](i-s)--(v6)--(i-g);
        \node[above=0.5 of v1] {(f)};
      \end{scope}
      {
        \coordinate[](c1) at (5.3,-1.0);
        \draw[line,->](c1)--(4.9,-1.8);
        \node[anchor=west] at (2.9,-0.9) {\tiny event: $e^1$};
        \node[anchor=west] at (2.9,-1.15) {\tiny crash: $\langle j; v_2; t=1\rangle$};
        \node[anchor=west] at (2.9,-1.4) {\tiny effect: $\langle v_2; t=2\rangle$};
        \draw[line,->](c1)--(6.5,-3.0);
        \node[anchor=west] at (5.7,-1.1) {\tiny event: $e^2$};
        \node[anchor=west] at (5.7,-1.35) {\tiny crash: $\langle k; v_3; t=2 \rangle$};
        \node[anchor=west] at (5.7,-1.6) {\tiny effect: $\langle v_3; t=3 \rangle$};
        \draw[line,->](4.9,-3.8)--(4.9,-4.6);
        \node[anchor=west] at (2.8,-3.9)  {\tiny event: $e^3$};
        \node[anchor=west] at (2.8,-4.15) {\tiny crash: $\langle k; v_3; t=2 \rangle$};
        \node[anchor=west] at (2.8,-4.40) {\tiny effect: $\langle v_3; t=3 \rangle$};
        \coordinate[](c2) at (2.85,0.9);
        \coordinate[](c3) at (8.0,-6.53);
        \node[fit=(c2)(c3),draw=lightgray,label=above:{$\P_i$}] {};
      }
    \end{tikzpicture}
    \caption{
      Running example of \algoname in SYN.
      A red cross corresponds to an observed crashed agent.
      Dotted lines are paths with which the planning should avoid collisions.
    }
    \label{fig:planner-viz}
  \end{figure}
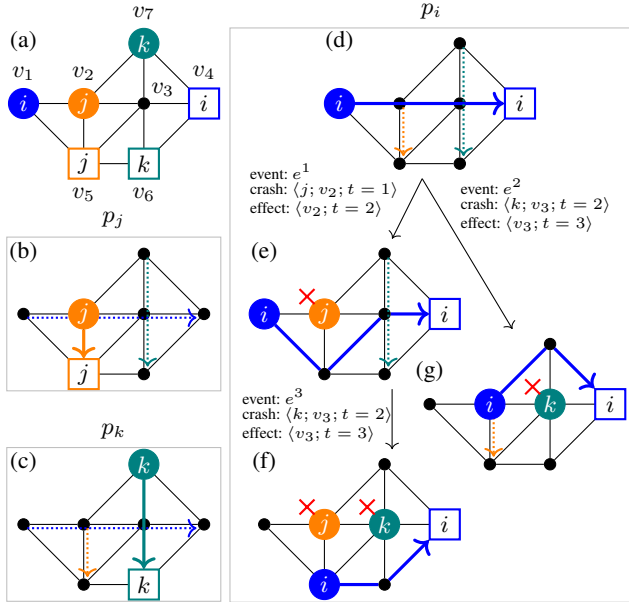
}

\paragraph{Resolving Events}
Unresolved events are stored in a priority queue $\mathcal{U}$ and handled one by one [Lines~\ref{algo:planner:while}--\ref{algo:planner:end-while}].
For each event, DCRF tries to find a backup path [\cref{algo:planner:backup}].
This is a single-agent pathfinding problem, whose goal location is the same as the initial path, while the start location is one-step before where the crashed agent is.
The pathfinding is constrained to avoid collisions with observed crashed agents and other already constructed plans.
If failed to find such a path, \algoname reports \FAILURE.
Otherwise, \algoname identifies new unresolved events with the backup path, and updates the plan.
The event is now resolved.
When all events are resolved, the framework returns a solution [\cref{algo:planner:return}].
In the example, when the event $e^1$ is popped from $\mathcal{U}$, \algoname computes the backup path shown in \cref{fig:planner-viz}e.
Observe that this backup path must not use $v_6$ to avoid collisions with $p_k$.
\algoname then identifies and registers a new unresolved event $e^3$, and updates $i$'s plan.
\algoname continues applying the same procedure for the events $e^2$ (\cref{fig:planner-viz}g) and $e^3$ (\cref{fig:planner-viz}f).

\paragraph{Remark}
\algoname is incomplete, i.e., it does not guarantee to return a solution even though an instance is solvable.
Such a failure example is available in the Appendix.

\subsection{Implementation Details}
\label{sec:impl}

\paragraph{Discarding Unnecessary Events}
When one crash affects a given path several times, only the first effect should be resolved.
This happens when the path is not simple.
For this purpose, the queue $\mathcal{U}$ stores events in ascending order of their occurring time.
DCRF then discards events when encountering already resolved crashes.

\paragraph{Inconsistent Crash Patterns}
When preparing a backup path for agent $i$, pathfinding does not necessarily need to avoid collisions with all others' paths.
For instance, \cref{fig:planner-viz}e assumes $j$ has crashed.
Therefore, in descendant backup paths for $i$, $i$ can neglect $j$'s plan.
Similarly, when two different paths assume crashes at distinct locations of the same agent, or when two paths assume more than $f$ crashes in total, those two paths cannot be executed during the same execution.
Consequently, \algoname does not need to identify unresolved events between these paths.

\paragraph{Refinement of Initial Paths}
Reducing the number of events is crucial for constructing solutions, as a higher number of events yields a higher number of paths for each agent.
Consequently, when preparing a backup path, the pathfinding process tries to avoid collisions with those many paths, and may fail to find a suitable path.
To circumvent this problem, when preparing initial plans, we introduce a refinement phase that minimizes the use of shared vertices between agents.
This is done by adapting the technique to improve MAPF solutions~\cite{okumura2021iterative}.

\paragraph{Implementation for SEQ}
\algoname is applicable to SEQ by simply replacing `collision' and `MAPF' by `deadlocks' and `OTIMAPP,' respectively.

\paragraph{Implementation for AFD}
For AFD, the same workflow is available by assuming that observed crashes are anonymous.

{
  \newcommand{\colsize}{0.15\linewidth}
  \setlength{\tabcolsep}{1pt}
  \newcommand{\block}[3]{
    &
    \begin{minipage}{\colsize}
      \centering
      \includegraphics[width=1.0\linewidth]{fig/raw/failure_#1_#2_fix_#3.pdf}
    \end{minipage}
    &
    \begin{minipage}{\colsize}
      \centering
      \includegraphics[width=1.0\linewidth]{fig/raw/runtime_#1_#2_fix_#3.pdf}
    \end{minipage}
    &
    \begin{minipage}{\colsize}
      \centering
      \includegraphics[width=1.0\linewidth]{fig/raw/cost_#1_#2_fix_#3.pdf}
    \end{minipage}
  }
  \begin{figure*}[th!]
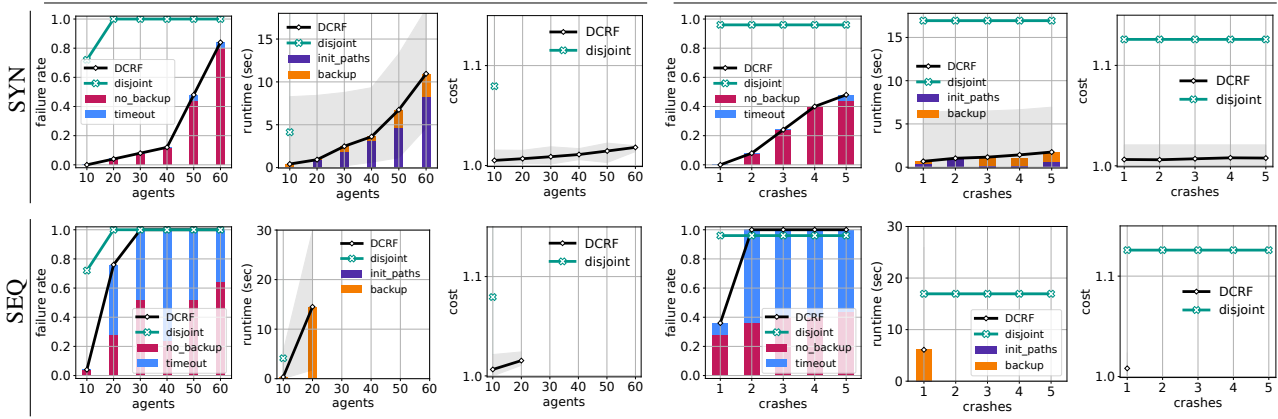

    \centering
    \small
    \begin{tabular}{rcccccc}
      \multicolumn{7}{c}{\mapname{random-32-32-10} (32x32; $|V|=922$)}
      \\
      & \multicolumn{3}{c}{fixed crashes: $f=1$}
      & \multicolumn{3}{c}{fixed agents: $|A|=15$}
      \\\cmidrule(lr){2-4}\cmidrule(lr){5-7}
      \multicolumn{1}{c|}{\rotatebox{90}{SYN}}
      \block{random-32-32-10}{sync}{crash}
      \block{random-32-32-10}{sync}{agent}
      \medskip\\
      \multicolumn{1}{c|}{\rotatebox{90}{SEQ}}
      \block{random-32-32-10}{seq}{crash}
      \block{random-32-32-10}{seq}{agent}
      \smallskip\\
      \multicolumn{7}{c}{\mapname{random-64-64-10} (64x64; $|V|=3,687$)}
      \\
      & \multicolumn{3}{c}{fixed crashes: $f=1$}
      & \multicolumn{3}{c}{fixed agents: $|A|=15$}
      \\\cmidrule(lr){2-4}\cmidrule(lr){5-7}
      \multicolumn{1}{c|}{\rotatebox{90}{SYN}}
      \block{random-64-64-10}{sync}{crash}
      \block{random-64-64-10}{sync}{agent}
      \medskip\\
      \multicolumn{1}{c|}{\rotatebox{90}{SEQ}}
      \block{random-64-64-10}{seq}{crash}
      \block{random-64-64-10}{seq}{agent}
    \end{tabular}
    \caption{
      Summary of results.
      Three types of figures are included.
      \emph{failure rate}:
      We also show failure reasons of \algoname by stacking graphs.
      `no\_backup' means that \algoname failed to prepare a backup path.
      `timeout' means that \algoname reaches the time limit.
      `init\_paths' means that \algoname fails to prepare initial paths.
      \emph{runtime}:
      The average runtime of successful instances is presented, accompanied by minimum and maximum values shown by transparent regions.
      We also show runtime profiling, which is categorized into preparing initial paths (`init\_paths') and computing backup paths (`backup').
      \emph{cost}:
      This rates solution quality when crashes do not happen.
      See the result description for details.
      The average, minimum, and maximum values of the successful instances are shown.
      Note that finding disjoint paths are irrelevant from $f$.
    }
    \label{fig:result-main}
  \end{figure*}
}

\section{Evaluation}
\label{sec:evaluation}

This section evaluates \algoname in both SYN and SEQ with NFD.
We present a variety of aspects including merits to consider MAPPCF and bottlenecks of the planning.

\paragraph{Baseline}
We compared \algoname with a procedure to obtain pairwise vertex-disjoint paths.
The rationale is that disjoint paths are trivially fault-tolerant;
regardless of crash patterns, correct agents can always reach their destinations.
On the other hand, with more agents, it is expected that finding such paths becomes impossible.
The disjoint paths were obtained by an adapted version of conflict-based search~\cite{sharon2015conflict}, a celebrated MAPF algorithm.
The adapted one is complete, i.e., eventually returning disjoint paths if they exist, otherwise, reporting non-existence.

\paragraph{Experimental Design}
MAPPCF has two critical factors: the number of agents $|A|$ and crashes $f$.
To investigate those effects on the planning, we prepared two scenarios: \emph{(i)}~fixing $f$ while changing $|A|$, or \emph{(ii)} fixing $|A|$ while changing $f$.
Each scenario was tested on two four-connected grids (size: 32x32 and 64x64) obtained from~\cite{stern2019def}.
These grids contain randomly placed obstacles (10\%).
For each scenario, grid, and $|A|$, we prepared $25$ well-formed instances~\cite{vcap2015prioritized}, considering the necessary condition for solutions to exist (see~\cref{sec:necessary-condition}); however, unsolvable instances might still be included because it is not sufficient.
The identical instances were used in both SYN and SEQ.

\paragraph{Planning Failure}
We regard that a method succeeds in solving an instance when it returns a solution before the timeout of \SI{30}{\second}; otherwise, the attempt is a failure.

\paragraph{Implementation of \algoname}
The initial paths were obtained by prioritized planning~\cite{vcap2015prioritized,okumura2022offline}, respectively for SYC and SEQ.
Single-agent pathfinding was implemented by \astar, adding a heuristic that penalizes the use of common vertices with other agents' paths.
We informally observed that this improves the success rate due to a smaller number of events.
We applied the refinement over initial paths (\cref{sec:impl}) for SYN but not for SEQ because the effect was subtle (see the Appendix).

\paragraph{Evaluation Environment}
The code was written in Julia and is available in the supplementary material.
The experiments were run on a desktop PC with Intel Core i9-7960X \SI{2.8}{\giga\hertz} CPU and \SI{64}{\giga\byte} RAM.
We executed 32 different instances in parallel using multi-threading.

\paragraph{Results}
\Cref{fig:result-main} shows the results.
The main findings are:

\myitem{Regardless of models, finding solutions become difficult to compute as the number of agents $|A|$ or crashes $f$ increase.}
With larger $|A|$ or $f$, \algoname needs to manage a huge number of crash patterns.
Consequently, \algoname often fails to find backup paths, or reaches timeout.

\myitem{MAPPCF can address more crash situations compared to just finding disjoint paths.}
Note however that the gaps in the success rate between \algoname and finding disjoint paths becomes smaller in SEQ.
This is partially due to finding deadlock-free paths as they are difficult to compute in SEQ.

\myitem{Without crashes, \algoname provides solutions with smaller costs, compared to disjoint paths.}
In \cref{fig:result-main}, we also present the cost of paths that agents are to follow if there is no crash.
For SYN, a cost is total traveling time (aka. sum-of-costs).
For SEQ, a cost is sum of path distance.
Both scores were normalized by sum of distances between start-goal pairs; hence the minimum is one.
Note that the cost is identical with different $f$ if $|A|$ and start-goal locations are the same.
The result indicates that \algoname provides better planning that suppresses redundant agents' motions when the entire system operates correctly.

\smallskip
The success rate of the planning in larger grids with fixed $f=1$ is available in the Appendix.
Since this is the first study of MAPPCF where agents may observe different information at runtime, we acknowledge the room for algorithmic improvements.
Therefore, we consider that further improvements of \algoname or developing new MAPPCF algorithms are promising directions.
Specifically, as seen in \cref{fig:result-main}, a large amount of failure reasons is failing to prepare backup paths.
\algoname takes a decoupled approach that sequentially plans a path for each agent.
Instead, developing coupled approaches may decrease the failure rate.

\section{Related Work}
\label{sec:related-work}

\paragraph{Path Planning for Multiple Agents}
Navigation for a team of agents are typically categorized into \emph{reactive} or \emph{deliberative} approaches.
Reactive approaches~\cite{van2011reciprocal,csenbacslar2019robust} make agents continuously react to situations at runtime to avoid collisions, while heading to their own destinations.
This class can deal with unexpected events such as crash failures.
However, provably deadlock-free systems are difficult to realize due to the shortsightedness of time evolution.
Deliberative approaches use a longer planning horizon to plan collision/deadlock-free trajectories, typically formulated as the \emph{multi-agent pathfinding (MAPF)} problem~\cite{stern2019def}.
Recent studies~\cite{atzmon2020robust,shahar2021safe,okumura2022offline} focus on robust MAPF for timing uncertainties, i.e., where agents might be delayed at runtime.
On the other hand, those studies assume that agents never crash and eventually take action; significantly different from ours.
MAPPCF is on the deliberative side but also has reactive aspects because agents change their behaviors at runtime according to failure detectors.

\algoname can be regarded as a two-level search, akin to popular MAPF algorithms~\cite{sharon2013increasing,sharon2015conflict,surynek2019unifying,lam2022branch}.
Those algorithms manage collisions at a high level, and perform single-agent pathfinding at a low level.
Instead of collision management, \algoname manages unresolved crash faults at the high level.

Multi-agent path planning with \emph{local observations} is not new in the literature~\cite{wiktor2014decentralized,zhang2016discof}.
Typically, previous studies aim at avoiding collisions or deadlocks by applying ad-hoc rules of agents' behavior, according to observation results at runtime, without assumptions of crash fault.
We note that learning-based MAPF approaches like~\cite{sartoretti2019primal} also assume local observations of each agent (e.g., field-of-view).

\paragraph{Resilient Multi-Robot Systems}
Studies on multi-robot systems sometimes assume robot crashes at runtime, e.g., for target tracking~\cite{zhou2018resilient}, orienteering~\cite{DBLP:conf/rss/ShiTZ20}, and task assignment~\cite{schwartz2020robust}.
In those studies, however, crashed robots do not disturb correct robots as we assume in this paper.
In the context of pathfinding, a few studies focus on system designs for potentially non-cooperative agents~\cite{bnaya2013multi,strawn2021byzantine} where those agents can pretend to be crashed.
However, those studies do not provide safe paths as presented in this paper.

\paragraph{Failure Detector}
The notion of a failure detector is inspired by a popular abstraction in theoretical distributed algorithms~\cite{chandra1996unreliable}, introduced to enable consensus solvability in an asynchronous setting.
With respect to the original concept, this paper assumes the detector to be both \emph{accurate} (i.e., it never suspects correct agents) and \emph{complete} (i.e., it always suspects crashed agents).
Removing these assumptions is an interesting future direction.
Also, our use of failure detectors is \emph{local}, that is, they provide localized information about failures, while previous works on failure detectors considered complete information oracles, that is, they provide \emph{global} information about \emph{all} agents.

\section{Conclusion and Discussion}
\label{sec:conclusion}
We studied a graph path planning problem for multiple agents that may crash at runtime, and block part of the workspace.
Different from conventional MAPF studies, each agent can change its executing path according to local crash failure detection;
hence a set of paths constitute a solution for each agent.
This paper presented a safe approach to ensure that correct agents reach their destinations regardless of crash patterns, including a series of theoretical analyses.
Finally, we list promising directions to extend MAPPCF.

\begin{itemize}
\item \textbf{Developing complete algorithms} that can address the crash awareness differences among agents.
\item \textbf{Optimization:}
  We focused on the feasibility problem (i.e., the decision problem of whether a given instance contains a solution); optimization problems are not addressed.
  One potential objective is minimizing the worst-case makespan (i.e., the maximum traveling time), which is convenient to practical situations.
\item \textbf{Global Failure Detector:}
  We assumed that an agent detects crashes only when it is adjacent to crashed agents.
  Extending this observation range might improve the planning success rate because agents can determine their behavior based on supplementary knowledge about crashes.
  If the observation range is the entire graph, MAPPCF becomes a centralized problem, since every agent has the same information.
\end{itemize}

\section*{Acknowledgments}
We thank the anonymous reviewers for their many insightful comments and suggestions.
This work was partly supported by JSPS KAKENHI Grant Number 20J23011, JST ACT-X Grant Number JPMJAX22A1, and ANR project SAPPORO, ref. 2019-CE25-0005-1.
The work was carried out during Keisuke Okumura's staying in LIP6, partially supported by JSPS Overseas Challenge Program for Young Researchers.
KO also thanks the support of the Yoshida Scholarship Foundation.

\bibliography{ref}
\clearpage
\appendix
\section*{Appendix}

\section{Proof of Model Power Analysis}

\begin{theorem*}[\ref{thrm:nfd-afd}; NFD and AFD]
  In SEQ, NFD is strictly stronger than AFD.
\end{theorem*}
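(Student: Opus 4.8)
The plan is to invoke the earlier result that NFD is weakly stronger than AFD under a fixed execution model, so that only \emph{strictness} remains; for this it suffices to exhibit a single instance that admits a SEQ solution with NFD but none with AFD, using $f=1$. I would build an instance containing a designated \emph{observer} agent $i$ together with two agents $a$ and $b$ whose routes both pass through a common vertex $v$, with $v$ adjacent to a decision vertex $w$ lying on $i$'s primary path. The crux of the construction is that at $w$ agent $i$ has exactly two candidate detours $D_a$ and $D_b$: detour $D_a$ traverses a reversed edge (a swap) against the continuation of $a$'s route, while $D_b$ swaps with the continuation of $b$'s route. The starts $s_a,s_b$ are placed away from $w$, so that before $i$ reaches $w$ neither $a$ nor $b$ can appear on any neighbor of $w$ other than $v$ itself.

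For the positive direction I would give explicit plans. With NFD, the transition rule of $i$ at $w$ inspects the identity returned by $\NFD(v)$: if it is $a$ (so $a$ has crashed at $v$ and is therefore permanently absent from $D_a$), $i$ switches to $D_a$; if it is $b$, $i$ switches to $D_b$; and if $\NFD(v)\in\{\circ,\bot\}$ it keeps its primary path. Since $f=1$, at most one crash occurs, so whenever $i$ commits to $D_a$ the only agent that could have contested that detour has already crashed at $v$, and the remaining correct agents eventually clear the shared edge under any schedule; the symmetric statement holds for $D_b$. I would then check the routine cases (no crash, or a crash $i$ never observes) and verify that $a$ and $b$ always reach their own goals, concluding that these plans solve the instance in SEQ${}+{}$NFD for every execution schedule.

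For the negative direction I would argue by indistinguishability. Fix any candidate AFD solution and consider two executions. In $E_a$ the adversary activates only $a$ until it reaches and crashes at $v$, leaving $b$ idle at $s_b$; in $E_b$ it activates only $b$ until it crashes at $v$, leaving $a$ idle at $s_a$. Because $s_a,s_b$ are not neighbors of $w$ and the idle agent never moves, when $i$ is driven to $w$ its progress index and the entire vector of $\AFD$ readings over $\neigh{w}$ coincide in $E_a$ and $E_b$: every neighbor reads $\bot$ except $v$, which reads $\times$ in both. Hence $i$'s transition rule---being a function of the executing path, the progress index, and the detector outputs---must make $i$ take the \emph{same} detour in both. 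If that detour is $D_a$, the adversary continues $E_b$ by now activating $a$, producing the swap deadlock between $i$ and the still-correct $a$ on $D_a$; if it is $D_b$, the adversary continues $E_a$ by activating $b$, yielding the symmetric deadlock. Either way some execution prevents a correct agent from reaching its goal, so no AFD solution exists.

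The main obstacle I anticipate is the graph layout that makes the two scenarios genuinely symmetric to AFD yet demand opposite responses: I must route $a$ and $b$ through the \emph{same} vertex $v$ (so their crashes are location-indistinguishable) while giving their post-$v$ continuations swaps against \emph{different} detours of $i$, and simultaneously guarantee that with $f=1$ the surviving one of $a,b$ can still reach its goal without itself forcing an unavoidable deadlock. A secondary subtlety is justifying, within the SEQ semantics, that a single reversed edge between $i$'s detour and a correct agent is a genuine deadlock rather than transient waiting; here I would appeal to the swap (cyclic-dependency) characterization of deadlocks for OTIMAPP to close the gap.
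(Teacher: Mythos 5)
Your high-level schema is the same as the paper's (weak dominance from Theorem~1 plus a single separating instance), but your witness construction differs fundamentally, and as sketched it fails at $f=1$. In your negative direction, consider execution $E_b$: $b$ has crashed at the common vertex $v$, and you continue by activating the surviving agent $a$ to produce a swap deadlock with $i$ on $D_a$. But $D_a$ runs against the \emph{continuation} of $a$'s route, i.e., the segment beyond $v$ --- and $a$ can never reach that segment, because $v$ is permanently occupied by the crashed $b$. So the claimed deadlock never materializes. If you instead try to repair this by making $v$ a cut vertex (so that $a$ is genuinely forced through $v$ and must contest the corridor), then after the single allowed crash at $v$ the survivor cannot reach its goal by \emph{any} plan, so the instance is unsolvable even with NFD and separates nothing. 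This tension is intrinsic to $f=1$: the anonymity of AFD only bites when the responder's choice can itself cause a later, plan-independent failure, and with the crash budget already spent at $v$ there is no second fault left to punish the wrong choice. The paper's proof escapes exactly this way: it uses an instance tolerating \emph{two} crashes, where the deciding agent $j$, after observing an anonymous crash at the center, must itself transit one of two \emph{goal} vertices and may then crash there. ``Goal blocked forever'' is a failure mode that defeats the victim regardless of what backup paths its plan contains, whereas your swap-deadlock mode depends on the victim's plan actually traversing the contested edge.

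This points to the second, related gap: your impossibility argument only quantifies over plans of a special shape --- $a$ and $b$ both routed through $v$, and $i$ possessing exactly the two detours $D_a, D_b$ at $w$. A valid unsolvability proof must defeat \emph{every} set of plans, including ones that route $a$ or $b$ around $v$ (dissolving your indistinguishable pair $E_a$, $E_b$), or that equip $a$ with backup paths avoiding your corridor (dissolving the deadlock). The paper's instance enforces the needed structure combinatorially: every neighbor of each start, other than the center vertex, is another agent's goal, so the first move of every plan is forced to the center; and once the center is blocked, every remaining route of $j$ passes through one of the other two agents' goals, so the fatal binary choice is forced as well. You correctly identify this layout problem as your ``main obstacle,'' but the proposal leaves it unresolved, and the analysis above suggests it cannot be resolved within your $f=1$, deadlock-based framework; moving to two crashes and goal-blocking, as the paper does, is the missing idea. (Your secondary worry --- whether a head-on swap is a genuine SEQ deadlock rather than transient waiting --- is not a gap; the OTIMAPP cyclic-deadlock characterization does cover it.)
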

\begin{proof}
  \Cref{fig:seq-anonymous}a is an instance that is only solvable with NFD in SEQ.
  Regardless of the failure detector types, $i$ needs to move to the center vertex as the first action (\cref{fig:seq-anonymous}b);
  otherwise, $i$ may crash at the goal vertex of another agent, and this agent never reaches its goal due to $i$'s crash.
  The same argument holds for $j$ and $k$.

  Assume that $j$ detects someone is crashed at the center vertex.
  Then $j$ needs to move to either the goal of $i$ or the goal of $k$.
  Consider that $j$ moves to $k$'s goal, and then crashes (\cref{fig:seq-anonymous}c).
  If the crashed agent at the center vertex is $i$, the remaining correct agent $k$ cannot reach its goal.
  A symmetric situation occurs if $j$ moves to $i$'s goal but the crashed agent at the center vertex is $k$.
  As a result, this instance is unsolvable with AFD.
  In contrast, with NFD, $j$ can choose the ``correct'' vertex according to detected crashes, e.g., $i$'s goal when $i$ is crashed at the center vertex.
  Even if $j$ is further crashed at $i$'s goal, $k$ can reach its goal (\cref{fig:seq-anonymous}d).
\end{proof}

{
  \newcommand{\drawedge}{
    \foreach \u / \v in {v1/v2,v1/v3,v3/v5,v4/v6,v3/v4,v5/v6,v2/v4,v1/v6,v2/v5,v2/v3,v2/v6,v3/v6}
    \draw[line] (\u) -- (\v);
  }
  \newcommand{\cross}{$\mathbin{\tikz [x=1.4ex,y=1.4ex,line width=.2ex, red] \draw (0,0) -- (1,1) (0,1) -- (1,0);}$}%
  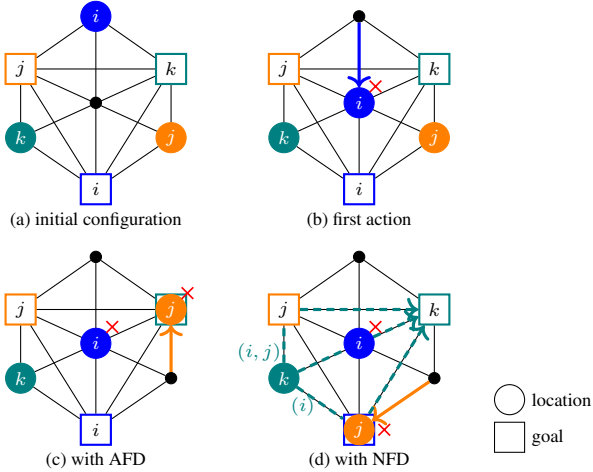
\begin{figure}[th!]
    \centering
    \scriptsize
    \begin{tikzpicture}
      \begin{scope}[shift={(0, 0)}]
        \node[start-vertex,blue,text=white](v1) at (0, 0) {$i$};
        \node[goal-vertex,draw=orange,thick](v2) at ($(v1)+(-1.0,-0.7)$) {$j$};
        \node[goal-vertex,draw=teal,thick](v3) at ($(v1)+( 1.0,-0.7)$) {$k$};
        \node[start-vertex,teal,text=white](v4) at ($(v2.south)+(0,-0.7)$) {$k$};
        \node[start-vertex,orange,text=white](v5) at ($(v3.south)+(0,-0.7)$) {$j$};
        \node[goal-vertex,draw=blue,thick](v6) at ($(v1)+(0,-2.3)$) {$i$};
        \drawedge
        \node[vertex](v7) at ($(v1)+(0,-1.15)$) {};
        \node[below=0 of v6](caption) {(a)~initial configuration};
      \end{scope}
      \begin{scope}[shift={(3.5, 0)}]
        \node[vertex](v1) at (0, 0) {};
        \node[goal-vertex,draw=orange,thick](v2) at ($(v1)+(-1.0,-0.7)$) {$j$};
        \node[goal-vertex,draw=teal,thick](v3) at ($(v1)+( 1.0,-0.7)$) {$k$};
        \node[start-vertex,teal,text=white](v4) at ($(v2.south)+(0,-0.7)$) {$k$};
        \node[start-vertex,orange,text=white](v5) at ($(v3.south)+(0,-0.7)$) {$j$};
        \node[goal-vertex,draw=blue,thick](v6) at ($(v1)+(0,-2.3)$) {$i$};
        \drawedge
        \node[start-vertex,blue,text=white](v7) at ($(v1)+(0,-1.15)$) {$i$};
        \node[above right=-0.15 of v7](crash-i) {\cross};
        \draw[line,blue,->,very thick](v1)--(v7);
        \node[below=0 of v6](caption) {(b)~first action};
      \end{scope}
      \begin{scope}[shift={(0, -3.2)}]
        \node[vertex](v1) at (0, 0) {};
        \node[goal-vertex,draw=orange,thick](v2) at ($(v1)+(-1.0,-0.7)$) {$j$};
        \node[goal-vertex,draw=teal,thick](v3) at ($(v1)+( 1.0,-0.7)$) {$k$};
        \node[start-vertex,teal,text=white](v4) at ($(v2.south)+(0,-0.7)$) {$k$};
        \node[vertex](v5) at ($(v3.south)+(0,-0.7)$) {};
        \node[goal-vertex,draw=blue,thick](v6) at ($(v1)+(0,-2.3)$) {$i$};
        \drawedge
        \node[start-vertex,blue,text=white](v7) at ($(v1)+(0,-1.15)$) {$i$};
        \node[above right=-0.15 of v7](crash-i) {\cross};
        \node[start-vertex,fill=orange,text=white,draw=orange](loc-j) at (v3) {$j$};
        \node[above right=-0.15 of loc-j](crash-j) {\cross};
        \draw[line,orange,->,very thick](v5)--(loc-j);
        \node[below=0 of v6](caption) {(c)~with AFD};
      \end{scope}
      \begin{scope}[shift={(3.5, -3.2)}]
        \node[vertex](v1) at (0, 0) {};
        \node[goal-vertex,draw=orange,thick](v2) at ($(v1)+(-1.0,-0.7)$) {$j$};
        \node[goal-vertex,draw=teal,thick](v3) at ($(v1)+( 1.0,-0.7)$) {$k$};
        \node[start-vertex,teal,text=white](v4) at ($(v2.south)+(0,-0.7)$) {$k$};
        \node[vertex](v5) at ($(v3.south)+(0,-0.7)$) {};
        \node[goal-vertex,draw=blue,thick](v6) at ($(v1)+(0,-2.3)$) {$i$};
        \drawedge
        \node[start-vertex,blue,text=white](v7) at ($(v1)+(0,-1.15)$) {$i$};
        \node[above right=-0.15 of v7](crash-i) {\cross};
        \node[start-vertex,fill=orange,text=white,draw=orange](loc-j) at (v6) {$j$};
        \node[right=-0.05 of loc-j](crash-j) {\cross};
        \draw[line,orange,->,very thick](v5)--(loc-j);
        %
        \draw[line,teal,->,densely dashed,very thick](v4)--(v7)--(v3);
        \draw[line,teal,->,densely dashed,very thick](v4)--(v2)--(v3);
        \draw[line,teal,->,densely dashed,very thick](v4)--(v6)--(v3);
        \node[color=teal] at ($(v4)+(0.4,0.05)$) {\scriptsize };
        \node[color=teal,anchor=east] at ($(v4)+(0.5,-0.37)$) {\scriptsize $(i)$};
        \node[color=teal,anchor=east] at ($(v4)+(0.05,0.35)$) {\scriptsize $(i,j)$};
        \node[below=0 of v6](caption) {(d)~with NFD};
      \end{scope}
      \begin{scope}[shift={(5.5, -5.1)}]
        \node[start-vertex, label=right:{\scriptsize location}](label-s) at (0, 0){};
        \node[goal-vertex, label=right:{\scriptsize goal}](label-g) at (0, -0.5){};
      \end{scope}
    \end{tikzpicture}
    \caption{
      Instance that is solvable with NFD, but unsolvable for AFD in SEQ.
      A red cross corresponds to a crashed agent.
      In (d), part of the plan of $k$ is visualized by dashed arrows, annotated with detected crashes.
    }
    \label{fig:seq-anonymous}
  \end{figure}
}

\section{Diode Gadgets}

\begin{observation}[diode gadget for SEQ]
Assuming SEQ, in \cref{fig:diode}b, agent $i$ cannot go back to its starting location once it has reached  vertex $u$.
\end{observation}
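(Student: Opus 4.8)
The plan is to exhibit an adversarial SEQ execution in which $i$ traverses the gadget forward to $u$ but is thereafter permanently barred from returning to its start $s_i$; since a valid solution must survive every schedule and every crash pattern with at most $f$ faults, this shows that no plan of $i$ may rely on returning, so the gadget enforces one-way traversal. First I would read off the gadget's structure in \cref{fig:diode}b. Writing $c$ for the central vertex that is the unique common neighbour of $s_i$ and $u$, one checks that $c$ is a cut vertex: deleting it separates $u$ from $\{s_i, s_\alpha, g_\alpha\}$, since $u$'s only neighbour inside the gadget is $c$. Hence any return walk from $u$ to $s_i$ must first step $u \to c$, and must finish with a move into $s_i$ from one of its gadget neighbours $c, s_\alpha, g_\alpha$; in particular the return needs both $c$ and $s_i$ to be enterable at the appropriate moments.

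Next I would build the blocking execution using the auxiliary agent $\alpha$, which must travel from $s_\alpha$ to $g_\alpha$ and whose only two routes pass through $s_i$ or through $c$. I would schedule $i$ first, so it crosses $s_i \to c \to u$ while $\alpha$ waits at $s_\alpha$; this is consistent with the premise that $i$ reaches $u$, because $c$ is free during the crossing. Once $i$ has vacated the relevant vertex, I drive $\alpha$ onto $i$'s return route and crash it there, spending a single fault (so $f \ge 1$ suffices): if $\alpha$'s route uses $c$, I move it into $c$ right after $i$ leaves $c$ for $u$ and crash it at $c$; if $\alpha$'s route uses $s_i$, I move it into the now-vacant $s_i$ and crash it at $s_i$. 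By the SEQ semantics a crashed agent remains on its vertex forever, and a move into an occupied vertex is refused, so the chosen vertex is blocked for all subsequent activations.

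Finally I would close the argument. In the first case $c$, the sole gadget-neighbour of $u$, is permanently occupied, so $i$ cannot even take the first return step $u \to c$; in the second case $s_i$ is permanently occupied, so every move into $s_i$ (from $c$, $s_\alpha$, or $g_\alpha$) is refused, whatever transition rules $i$'s plan prescribes. Either way $i$ can never re-enter $s_i$, while its forward continuation from $u$ lies in the rest of the graph and so still reaches $g_i$; thus this is an execution any correct solution must tolerate, which yields the claim. I expect the main obstacle to be the scheduling bookkeeping of the middle paragraph: arguing precisely that the activation order can be chosen so that $i$'s forward pass across $c$ succeeds \emph{before} $\alpha$ is placed on the return route, using SEQ's ``move only when the next vertex is free'' rule together with the fact that $\alpha$ cannot reach $g_\alpha$ without occupying $s_i$ or $c$.
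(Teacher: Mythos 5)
Your proposal is correct and takes essentially the same approach as the paper: both exhibit an adversarial execution in which $\alpha$, which must eventually transit the center vertex or $s_i$ to reach $g_\alpha$, is crashed on $i$'s return route right after $i$ reaches $u$, so that no valid plan of $i$ may rely on returning. The differences are minor: the paper pins down the timing by first proving that $\alpha$ cannot move before $i$ occupies the center vertex (else a crash there traps $i$, which also discharges the scheduling worry you flag, since a valid solution cannot have $i$ and $\alpha$ each wait for the other), whereas you obtain the ordering by delaying $\alpha$'s activations in the schedule; you also explicitly cover the route through $s_i$, which the paper leaves implicit by sending $\alpha$ through the center vertex.
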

\begin{proof}
  Observe that agent $\alpha$ cannot move until $i$ moves to the center vertex;
  otherwise, $i$ cannot reach $u$ if $\alpha$ crashes at the center vertex.
  Once $i$ reaches $u$, $\alpha$ moves to the center vertex.
  If $\alpha$ crashes there, then $i$ cannot go back to its starting location.
  Note that regardless of the crashed status of $i$, $\alpha$ can reach its goal.
\end{proof}

\begin{observation}[diode gadget for SYN]
Assuming SYN, in \cref{fig:diode}c, agent $i$ cannot go back to its starting location once it has reached  vertex $u$.
\end{observation}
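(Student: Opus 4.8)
The plan is to argue exactly as in the SEQ observation, by contradiction, exploiting the structural fact that in \cref{fig:diode}c the unique neighbor of $i$'s start is the center vertex $c$ (the common neighbor of the goals of $\beta$ and $\gamma$); hence $\{c\}$ is a cut set isolating $i$'s start from the rest of the gadget, and all starts, goals, and intermediate vertices of $\beta$ and $\gamma$ lie on the far side of $c$, so the auxiliary agents never touch the start side except through $c$. Because $c$ is a cut vertex, it suffices to force $c$ to be blocked by a crash in order to forbid any return. Concretely, I would assume a valid SYN solution in which $i$'s plan, from some reachable configuration with $i$ at $u$, ever steps back toward its start, and then exhibit a crash pattern of at most $f$ agents (since $f \ge 1$ in the remainder, one crash suffices) under which this step cannot be completed, contradicting the solution's correctness.

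First I would reproduce the forward part of the SEQ argument: in any valid solution neither $\beta$ nor $\gamma$ may occupy the vertex that $i$ needs next while $i$ is still traversing from its start through $c$ to $u$, since a crash of that auxiliary agent there would block $i$ from reaching $u$; this forces a temporal ordering in which $i$ clears the central region before the auxiliary agents fill it. Next I would use that $\beta$ (lower-right start to upper goal) and $\gamma$ (upper-right start to lower goal) have \emph{swapped} start/goal pairs and are confined to the far side of $c$, so any collision-free simultaneous schedule that also stays correct under one crash must route one of them across $c$ at the moment $i$ first sits at $u$. Crashing that agent on $c$ (one crash) then permanently occupies the cut vertex, so $i$, wherever it is among $u$ and the four surrounding vertices, can no longer reach its start; this yields the contradiction. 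I would close by checking solvability, i.e.\ that the surviving auxiliary agent still reaches its goal and $i$ still proceeds past $u$ so the gadget does not spuriously destroy solutions, which is the analogue of the last sentence of the SEQ observation.

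The hard part will be the middle step: proving that the simultaneity and swap-collision rules of SYN \emph{force} an auxiliary agent onto $c$ precisely during $i$'s would-be return window, and that no clever schedule lets both $\beta$ and $\gamma$ complete their swap while keeping $c$ free long enough for $i$ to slip back. This is exactly where the single-auxiliary-agent SEQ construction fails to transfer, which is presumably why the SYN gadget introduces the symmetric pair $\beta,\gamma$ whose forced mutual passage leaves no crash-free, collision-free way to vacate $c$ at the critical timestep. I would discharge this by a finite case analysis over the relative timings of the swap against $i$'s position (using that $c$ is a degree-bounded hub and that both auxiliary agents must cross the narrow central region), showing that every case admits a single crash on $c$ that seals the valve.
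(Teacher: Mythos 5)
Your proposal is correct and takes essentially the same route as the paper's proof: force the auxiliary agents to wait until $i$ clears its start, observe that one of them must then occupy the unique neighbor of $i$'s start (your cut vertex $c$) during $i$'s passage, crash it there with a single fault to seal the only doorway back, and verify that the surviving agents still reach their goals. The step you flag as hard is discharged in the paper by exhibiting the forced three-step execution of \cref{fig:diode-sync-step} --- because $i$ may not use the goals of $\beta$ and $\gamma$, and neither auxiliary agent may enter $u$, $c$, or the other's goal while $i$ still needs them (a crash there would strand a correct agent), the only legal progress once $i$ sits on $c$ is a three-cycle rotation placing $\beta$ or $\gamma$ on $c$ --- so the crashable occupation of $c$ happens one step \emph{before} $i$ reaches $u$ (the agent may vacate $c$ simultaneously with $i$'s arrival), and the adversary crashes it at the start of that very timestep, a slight retiming of your ``moment $i$ first sits at $u$.''
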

\begin{proof}
  We first show how the agent $i$ reaches the vertex $u$ in \cref{fig:diode-sync-step}.
  In the initial configuration (\cref{fig:diode}c), $\beta$ needs to wait for $i$ to move from the starting location;
  otherwise, if $\beta$ crashes there, either $i$ or $\gamma$ cannot reach its goal.
  The core observation is that $\gamma$ (or $\beta$) must use the right neighboring vertex of $i$'s starting location in Step~2 of \cref{fig:diode-sync-step}.
  Therefore, if $\gamma$ crashes at Step~2, $i$ cannot go back to the starting location.
  Note that regardless of the crashed status of $i$, $\beta$, and $\gamma$, the remaining agents can reach their goals.
\end{proof}

{
  \newcommand{\edgesizeA}{0.6}
  \newcommand{\edgesizeB}{0.8}  
  \newcommand{\edgesizeC}{0.4}  
  \newcommand{\edgesizeD}{1.5}
  \begin{figure}[th!]
    \centering
    \scriptsize
    \begin{tikzpicture}
      \begin{scope}[shift={(0, 0)}]
        \node[vertex](v1) at (0, 0) {};
        \node[start-vertex,right=\edgesizeA of v1.center](v2) {$i$};
        \node[goal-vertex,above=\edgesizeA of v2.center,thick,draw=blue](v3) {$\beta$};
        \node[goal-vertex,below=\edgesizeA of v2.center,thick,draw=orange](v4) {$\gamma$};
        \node[start-vertex,orange,text=white](v5) at ($(v3.center)+(\edgesizeB,-\edgesizeC)$) {$\gamma$};
        \node[start-vertex,blue,text=white](v6) at ($(v4.center)+(\edgesizeB,\edgesizeC)$) {$\beta$};
        \node[vertex,right=\edgesizeD of v2.center](v7) {};
        \foreach \u / \v in {v1/v2,v2/v3,v2/v4,v3/v5,v4/v6,v2/v5,v2/v6,v5/v6,v5/v7,v6/v7}
        \draw[line] (\u) -- (\v);
        \foreach \u / \v in {v3/v7,v4/v7}
        \draw[line] (\u) -| (\v);
        \draw[line,densely dotted,thick](v7)--($(v7)+(0.3,0)$);
        \draw[line,->,thick] (v1)--(v2);
        \node[] at (1.2,-1.3) {Step 1};
      \end{scope}
      \begin{scope}[shift={(2.9, 0)}]
        \node[vertex](v1) at (0, 0) {};
        \node[start-vertex,right=\edgesizeA of v1.center,orange,text=white](v2) {$\gamma$};
        \node[goal-vertex,above=\edgesizeA of v2.center,thick,draw=blue](v3) {$\beta$};
        \node[goal-vertex,below=\edgesizeA of v2.center,thick,draw=orange](v4) {$\gamma$};
        \node[start-vertex,blue,text=white](v5) at ($(v3.center)+(\edgesizeB,-\edgesizeC)$) {$\beta$};
        \node[start-vertex](v6) at ($(v4.center)+(\edgesizeB,\edgesizeC)$) {$i$};
        \node[vertex,right=\edgesizeD of v2.center](v7) {};
        \foreach \u / \v in {v1/v2,v2/v3,v2/v4,v3/v5,v4/v6,v2/v5,v2/v6,v5/v6,v5/v7,v6/v7}
        \draw[line] (\u) -- (\v);
        \foreach \u / \v in {v3/v7,v4/v7}
        \draw[line] (\u) -| (\v);
        \draw[line,densely dotted,thick](v7)--($(v7)+(0.3,0)$);
        \draw[line,->,thick] (v2)--(v6);
        \draw[line,->,thick,blue] (v6)--(v5);
        \draw[line,->,thick,orange] (v5)--(v2);
        \node[] at (1.2,-1.3) {Step 2};
      \end{scope}
      \begin{scope}[shift={(5.8, 0)}]
        \node[vertex](v1) at (0, 0) {};
        \node[vertex,right=\edgesizeA of v1.center](v2) {};
        \node[goal-vertex,above=\edgesizeA of v2.center,thick,draw=blue](v3) {$\beta$};
        \node[goal-vertex,below=\edgesizeA of v2.center,thick,draw=orange](v4) {$\gamma$};
        \node[vertex](v5) at ($(v3.center)+(\edgesizeB,-\edgesizeC)$) {};
        \node[vertex](v6) at ($(v4.center)+(\edgesizeB,\edgesizeC)$) {};
        \node[vertex,right=\edgesizeD of v2.center](v7) {};
        \node[start-vertex,blue,text=white] at (v3) {$\beta$};
        \node[start-vertex,orange,text=white] at (v4) {$\gamma$};
        \foreach \u / \v in {v1/v2,v2/v3,v2/v4,v3/v5,v4/v6,v2/v5,v2/v6,v5/v6,v5/v7,v6/v7}
        \draw[line] (\u) -- (\v);
        \foreach \u / \v in {v3/v7,v4/v7}
        \draw[line] (\u) -| (\v);
        \draw[line,densely dotted,thick](v7)--($(v7)+(0.4,0)$);
        \node[start-vertex,fill=white,text=black](loc-i) at (v7) {$i$};
        \draw[line,->,thick] (v6)--(loc-i);
        \draw[line,->,thick,blue] (v5)--(v3);
        \draw[line,->,thick,orange] (v2)--(v4);
        \node[] at (1.2,-1.3) {Step 3};
      \end{scope}
    \end{tikzpicture}
    \caption{
      Execution in the diode gadget for SYN without crashes.
    }
    \label{fig:diode-sync-step}
  \end{figure}
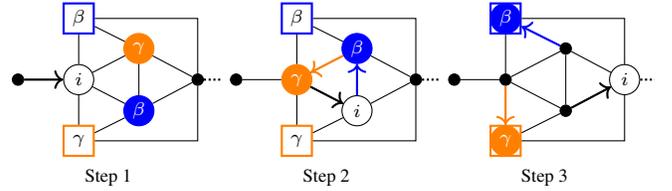
}

\section{Proof of Complexity}

\begin{theorem*}[\ref{thrm:verification}; complexity of verification]
  Verifying a solutions of MAPPCF is co-NP-complete regardless of models.
\end{theorem*}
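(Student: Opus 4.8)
The plan is to establish both membership in co-NP and co-NP-hardness, treating the verification problem as: given an instance and a candidate set of plans $\{\plan_i\}$, decide whether it is a solution, i.e. whether all correct agents reach their goals without collisions (SYN) or without deadlocks (SEQ) for \emph{every} crash pattern of at most $f$ crashes and---in SEQ---every execution schedule.

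For membership in co-NP I would exhibit a short refutation for any non-solution. A counterexample certificate consists of a crash pattern (the subset of at most $f$ crashing agents together with the timestep at which each crashes) and, for SEQ, a finite prefix of an execution schedule. Given such a certificate the execution is deterministic, so it can be simulated step by step: in SYN one replays the synchronous rounds and checks for a vertex/swap collision or for a correct agent that never advances; in SEQ one replays the activations until reaching a configuration in which every correct agent not yet at its goal is permanently blocked (a deadlock, detectable by inspecting the next-vertex blocking dependencies). The observation making this polynomial is that an agent can switch paths at most once per path in its plan, so the number of transitions---and hence the length of the simulated execution---is bounded by the total size of the plan. Thus a ``no'' answer has a polynomially checkable witness, placing verification in co-NP.

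For co-NP-hardness I would reduce 3-SAT to the complement problem (does some crash pattern break the candidate plan?), which is therefore NP-hard; equivalently the candidate is valid iff the formula is unsatisfiable. Mirroring the directed construction of \Cref{fig:np-hard-directed}, I would introduce a \emph{variable agent} $V_i$ for each variable $x_i$ whose path threads a true-slot and a false-slot; crashing $V_i$ at its true-slot (resp.\ false-slot) encodes $x_i$ true (resp.\ false), and not crashing $V_i$ leaves both slots free. A single \emph{checker agent} $t$ is given a candidate plan consisting of one backup route per clause $C_j$; route $j$ is threaded through the slot vertices of the three literals of $C_j$, with polarities chosen so that a literal's gate is obstructed exactly when that literal is satisfied. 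Consequently route $j$ is traversable iff clause $C_j$ is violated, so $t$ (kept correct) can reach its goal iff some clause is violated, i.e.\ iff the induced assignment does not satisfy the formula. Setting $f=n$ lets a crash pattern realise any assignment within budget, and---since fewer crashes only free more gates---a breaking pattern exists iff the formula is satisfiable. Hence the candidate is a solution iff the formula is unsatisfiable, giving co-NP-hardness; the directed gadgets are promoted to undirected graphs and to SYN by the diode gadgets of \Cref{fig:diode}, and the argument is insensitive to NFD versus AFD, establishing the result regardless of models.

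I expect the main obstacle to be the timing and scheduling bookkeeping rather than the high-level encoding. I must ensure a gate is obstructed \emph{exactly} when its variable agent is crashed at the corresponding slot---so that a non-crashed $V_i$ has vacated the slot before $t$ arrives (tuned via path lengths in SYN and via admissible schedules in SEQ), while a crashed $V_i$ blocks it forever---and that no unintended collision or deadlock lets a partial crash pattern break the plan for reasons unrelated to clause violation, nor prevents an uncrashed variable agent from reaching its own goal. Verifying these invariants uniformly across SYN/SEQ and both detector types, much as in the supporting observations for the diode gadgets, is where the bulk of the careful case analysis will lie.
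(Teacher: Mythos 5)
Your proposal is correct in outline and proves the statement, but your hardness reduction takes a genuinely different route from the paper's. The paper (Appendix, \Cref{fig:verification}) also reduces SAT to the complement problem, yet it encodes an assignment by whether each one-step variable agent crashes at its start (true) or sits forever on its reached goal (false)---note that no crash is needed to realize ``false,'' since an arrived correct agent blocks its goal permanently---and it gives every clause its own clause agent, equipped with a primary plan threading a start or goal of a variable agent and a one-step contingency plan; the breaking mechanism is a set of $l-1$ shared bottleneck vertices, so that exactly when the formula is satisfiable all $l$ clause agents can take their primary plans and $l-1$ of them crash on the bottlenecks to strand the last one. Your construction instead concentrates the failure in a single checker agent $t$ with one backup route per clause, encodes the assignment by the crash \emph{position} (true-slot versus false-slot) along each variable agent's path, and arranges that a route is traversable iff its clause is violated; this dispenses with the bottleneck gadget and makes the potentially stranded agent unique, at the price of needing $f=n$ crashes and of the timing invariants you flag yourself (the paper's encoding sidesteps these, because ``false'' is realized by a correct agent parked forever on its goal rather than by a tuned vacating schedule, and its observation edges handle the correct-versus-crashed distinction at the gates). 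Two caveats: first, your co-NP membership argument rests on the claim that an agent switches executing paths at most once per path in its plan, which does not hold for arbitrary candidate plans---transition rules may revisit paths, so the polynomial bound on the simulated prefix needs a separate argument (the paper is equally terse here, asserting in one sentence that a crash-pattern-plus-schedule witness is checkable in polynomial time); second, as you note, you must still verify that the variable agents always reach their own goals and induce no collisions or deadlocks, since a solution quantifies over \emph{all} correct agents, not just the checker.
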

\begin{proof}
  We show the case of SEQ with AFD.
  The proof is easily applicable to other models such as those with SYN or with NFD.
  The proof is done by reduction of the SAT problem, i.e., constructing a combination of an MAPPCF instance and a set of plans such that the plans are a solution if and only if the corresponding formula is unsatisfiable.
  Since the infeasibility check of the plan is done in polynomial time with a proper execution schedule and crash patterns, the verification is co-NP-complete.
  Throughout the proof, we use the following example:
  $(x \lor y \lor \lnot z) \land (\lnot x \lor y \lor z) \land (\lnot x \lor \lnot y)$.
  The reduction is depicted in \cref{fig:verification}.
  Let denote the number of clauses in the formula as $l$.

{
  \newcommand{\edgesizeA}{1.3}
  \newcommand{\edgesizeB}{4.7}
  \newcommand{\edgesizeC}{3.1}
  \newcommand{\edgesizeD}{1.1}
  \newcommand{\edgesizeE}{0.5}
  \newcommand{\edgesizeF}{1.8}  
  \newcommand{\edgesizeG}{0.5}  
  \newcommand{\edgesizeH}{3.5}  
  \newcommand{\edgesizeI}{0.6}  
  \newcommand{\edgesizeJ}{0.5}  
  \newcommand{\edgesizeK}{0.5}  
  \begin{figure}[tb!]
    \centering
    \scriptsize
    \begin{tikzpicture}
      \begin{scope}[shift={(0, 0)}]
        \node[start-vertex](c1-s) at (0, 0) {$C^1$};
        \node[start-vertex,right=\edgesizeA of c1-s.center,fill=brown,text=white,draw=brown](c2-s) {$C^2$};
        \node[start-vertex,right=\edgesizeA of c2-s.center](c3-s) {$C^3$};
        %
        \node[start-vertex,blue,text=white](v1-s) at ($(c1-s) + (-0.9, -1.0)$) {$x$};
        \node[start-vertex,below=\edgesizeK of v1-s.center](v2-s) {$y$};
        \node[start-vertex,below=\edgesizeK of v2-s.center](v3-s) {$z$};
        \node[goal-vertex,right=\edgesizeB of v1-s.center,draw=blue,thick](v1-g) {$x$};
        \node[goal-vertex,right=\edgesizeB of v2-s.center](v2-g) {$y$};
        \node[goal-vertex,right=\edgesizeB of v3-s.center](v3-g) {$z$};
        \node[text=blue,anchor=east] at ($(v1-s.west)+(0,0)$) (label-true){true};
        \node[text=blue,anchor=west] at ($(v1-g.east)+(0,0)$) (label-true){false};
        %
        \node[vertex,below=\edgesizeC of c1-s.center](c1-m) {};   
        \node[vertex,below=\edgesizeC of c2-s.center](c2-m) {};
        \node[vertex,below=\edgesizeC of c3-s.center](c3-m) {};
        \node[vertex](m1) at ($(c1-m) + (0.8,-0.7)$) {};
        \node[vertex,right=\edgesizeA of m1](m2) {};
        %
        \node[goal-vertex,below=\edgesizeD of c1-m.center](c1-g) {$C^1$};
        \node[goal-vertex,below=\edgesizeD of c2-m.center,draw=brown,thick](c2-g) {$C^2$};
        \node[goal-vertex,below=\edgesizeD of c3-m.center](c3-g) {$C^3$};
        %
        \foreach \u / \v in {
          c1-s/v1-s,c1-s/v2-s,
          c2-s/v2-s,c2-s/v3-s,
          c3-s/v3-s}
        \draw[line,thin,dotted](\u)--(\v);
        %
        \foreach \u / \v in {
          c1-s/v1-g,v1-g/c1-m,c1-s/v2-g,v2-g/c1-m,c1-s/v3-s,v3-s/c1-m,
          c3-s/v1-s,v1-s/c3-m,c3-s/v2-s,v2-s/c3-m,
          c1-m/m1,c1-m/m2,m1/c1-g,m2/c1-g,
          c3-m/m1,c3-m/m2,m1/c3-g,m2/c3-g}
        \draw[line,lightgray](\u)--(\v);
        %
        \foreach \u / \v in {c2-s/v1-s,v1-s/c2-m,c2-s/v2-g,v2-g/c2-m,c2-s/v3-g,v3-g/c2-m,
          c2-m/m1,c2-m/m2,m1/c2-g,m2/c2-g}
        \draw[line,brown,->,thick](\u)--(\v);
        %
        \draw[line,lightgray,->](c1-s)
        |-($(c1-s)+(-\edgesizeF,\edgesizeG)$)
        --($(c1-g)+(-\edgesizeF,-\edgesizeG)$)
        -|(c1-g);
        \draw[line,brown,->,thick](c2-s)
        |-($(c2-s)+(-\edgesizeH,\edgesizeI)$)
        --($(c2-g)+(-\edgesizeH,-\edgesizeI)$)
        node [midway,above,rotate=90] {contingency path}
        -|(c2-g);
        \draw[line,lightgray,->](c3-s)
        |-($(c3-s)+(\edgesizeF,\edgesizeG)$)
        --($(c3-g)+(\edgesizeF,-\edgesizeG)$)
        -|(c3-g);
        %
        \draw[line,->,blue,thick](v1-s)--(v1-g);
        \draw[line,lightgray](v2-s)--(v2-g);
        \draw[line,lightgray](v3-s)--(v3-g);
        %
        \node[above=\edgesizeE of c1-s](c1-l) {$(x\lor y \lor\lnot z)$};
        \node[above=\edgesizeE of c2-s](c2-l) {$(\lnot x\lor y\lor z)$};
        \node[above=\edgesizeE of c3-s](c3-l) {$(\lnot x\lor \lnot y)$};
        %
        \node[text=brown,anchor=west] at ($(c2-s)+(-0.05,0.55)$) {\tiny when failing};
        \node[text=brown,anchor=west] at ($(c2-s)+(-0.05,0.35)$) {\tiny to move down};
        \node[text=brown,anchor=east] at ($(c2-m)+(-0.15,0.15)$) {\tiny move to one};
        \node[text=brown,anchor=east] at ($(c2-m)+(-0.15,-0.05)$) {\tiny of vertices};
        %
        \node[anchor=west,right=0.05 of c3-m]{rest vertex};
        \node[anchor=west,right=0.05 of m2]{bottleneck vertex};
        \node[anchor=east] at ($(c1-s)+(-0.4,-0.1)$) {\tiny observation};
        \node[anchor=east] at ($(c1-s)+(-0.4,-0.3)$) {\tiny edge};
      \end{scope}
    \end{tikzpicture}
    \caption{
      MAPPCF instance and solution reduced from the SAT instance $(x \lor y \lor \lnot z) \land (\lnot x \lor y \lor z) \lor (\lnot x \lor \lnot y)$.
      Dotted lines are edges that are not used in the plans.
    }
    \label{fig:verification}
  \end{figure}
}

  \smallskip
  \noindent
  \emph{A. Construction of Instance and Plans:}
  We first introduce a \emph{variable agent} for each variable $x_i$ of the formula.
  In \cref{fig:verification}, we highlight the corresponding agent of the variable $x$ as blue-colored.
  A plan for a variable agent is just to move one step to the right.

  We next introduce a \emph{clause} agent for each clause $C^j$.
  In \cref{fig:verification}, we highlight the corresponding agent of the clause $C^2 = \lnot x \lor y \lor z$ as brown-colored.
  A plan for the clause agent consists of two sub-plans:
  \begin{itemize}
    \item A \emph{primary plan} passes through \emph{(i)}~either the start or the goal of a variable agent, \emph{(ii)}~a \emph{rest vertex}, \emph{(iii)}~one of the \emph{bottleneck vertices}, and \emph{(iv)} its goal (so, four steps in total).
    \item An \emph{contingency plan} directly reaches the goal in one step.
      This plan is used when the clause agent cannot take the first step of the primary plan.
  \end{itemize}
  Each rest vertex is unique to each clause agent.
  The start of a clause agent is connected to a rest vertex via a start (or goal) of a variable agent when the clause contains a negative (resp. positive) literal of the variable.
  Bottleneck vertices are shared between clause agents, and their number is $l-1$.
  Each rest vertex and goal of a clause agent is connected to all bottleneck vertices.

  We further add \emph{observation edges} (dotted lines) between each start of a clause agent and start of a variable agent.
  These edges are not included in any plans but they enable clause agents not to enter the goals of variable agents when the variable agents are correct, and they are on their start locations.

  The translation from the formula into an MAPPCF instance and the plans is clearly done in polynomial time.

  \smallskip
  \noindent
  \emph{B. The formula is unsatisfiable when the plans are solution:}
  We can build any assignment by the execution of MAPPCF.
  Specifically, assign a variable true when the corresponding variable agent is crashed at its start; otherwise false, i.e., when the variable agent enters its goal.

  Assume now that all clause agents use their primary plan, and are currently on their rest vertex.
  If $l-1$ clause agents move to bottleneck vertices, and crash there, then the remaining agent cannot reach its goal following the plan.
  Therefore, for the set of plans to be a solution, it is necessary that at least one clause agent takes its contingency plan.
  Consider now that clause agent.
  This agent $a$ cannot take the primary plan because some variable agents are blocking $a$ either due to a crash at their start, or upon reaching their goal.
  The corresponding assignment is unsatisfiable for the clause.
  For instance, in \cref{fig:verification}, assume that the clause agent $C^2$ takes the contingency plan.
  This happens when the variable agent $x$ is crashed at its start, and both $y$ and $z$ have entered their goals.
  The assignment is then $x=\mathtt{T}$, $y=\mathtt{F}$ and $z=\mathtt{F}$, which is unsatisfiable for $C^2$.

  \smallskip
  \noindent
  \emph{C. The plans are solution when the formula is unsatisfiable:}
  When the formula is unsatisfiable, it never happens that all agents take their primary plan and all correct agents are ensured to reach their goals.
\end{proof}

\section{Failure Example}

\Cref{fig:failure-example} presents a failure planning example of \algoname in SYN.
An MAPPCF instance is shown in \cref{fig:failure-example}a.
Assume that vertices $v_1$ and $v_5$ are connected through a long path including $v_{11}$.
Observe first that this instance contains a solution (shown in~\cref{fig:failure-example}h);
those paths are vertex disjoint, hence all agents reach their goals regardless of crash patterns.

Consider now that \algoname provides initial plans as \cref{fig:failure-example}e, \ref{fig:failure-example}b, \ref{fig:failure-example}d respectively for agent $i$, $j$, and $k$ at \cref{algo:planner:init}.
Then, the planning eventually fails because there is a crash pattern that $i$ cannot prepare a backup path, as shown in \cref{fig:failure-example}g.

{
  \newcommand{\edgesize}{0.6}
  \newcommand{\drawgraph}{
    \node[vertex](v1) at (0, 0) {};
    \node[vertex](v2) at ($(v1.center)+(\edgesize,0)$) {};
    \node[vertex](v3) at ($(v1.center)+(2*\edgesize,0)$) {};
    \node[vertex](v4) at ($(v1.center)+(3*\edgesize,0)$) {};
    \node[vertex](v5) at ($(v1.center)+(4*\edgesize,0)$) {};
    \node[vertex](v6) at ($(v1.center)+(\edgesize,2*\edgesize)$) {};
    \node[vertex](v7) at ($(v1.center)+(\edgesize,\edgesize)$) {};
    \node[vertex](v8) at ($(v1.center)+(\edgesize,-\edgesize)$) {};
    \node[vertex](v9) at ($(v1.center)+(3*\edgesize,\edgesize)$) {};
    \node[vertex](v10) at ($(v1.center)+(3*\edgesize,-\edgesize)$) {};
    \node[vertex](v11) at ($(v1.center)+(2*\edgesize,-2*\edgesize)$) {};
    \foreach \u / \v in {v1/v5,v6/v8,v9/v10,v1/v7,v1/v8}
    \draw[line] (\u) -- (\v);
    \draw[line,densely dashed] (v1)|-(v11)-|(v5);
  }
  \newcommand{\cross}{$\mathbin{\tikz [x=1.4ex,y=1.4ex,line width=.2ex, red] \draw (0,0) -- (1,1) (0,1) -- (1,0);}$}
  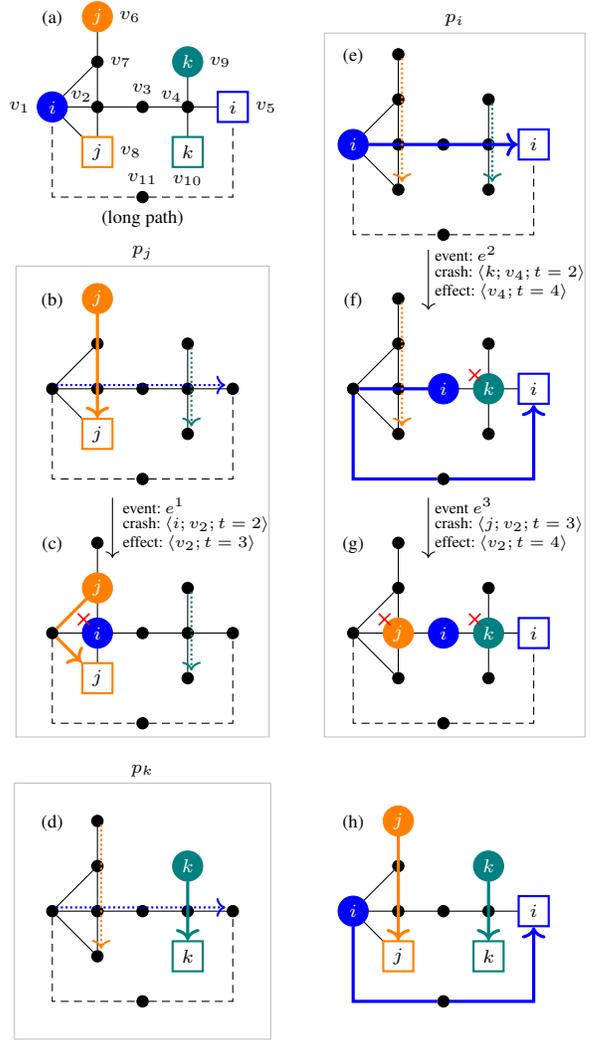
\begin{figure}[tb!]
    \centering
    \scriptsize
    \begin{tikzpicture}
      \begin{scope}[shift={(0, 0)}]
        \drawgraph
        \node[start-vertex,blue,text=white,label=left:{$v_1$}] at (v1) {$i$};
        \node[goal-vertex,draw=blue,thick,label=right:{$v_5$}] at (v5) {$i$};
        \node[start-vertex,orange,text=white,label=right:{$v_6$}] at (v6) {$j$};
        \node[goal-vertex,draw=orange,thick,label=right:{$v_8$}] at (v8) {$j$};
        \node[start-vertex,teal,text=white,label=right:{$v_9$}] at (v9) {$k$};
        \node[goal-vertex,draw=teal,thick,label=below:{$v_{10}$}] at (v10) {$k$};
        \node[above left=-0.1 of v2] {$v_2$};
        \node[above=0 of v3] {$v_3$};
        \node[above left=-0.1 of v4] {$v_4$};
        \node[right=0 of v7] {$v_7$};
        \node[above=0 of v11] {$v_{11}$};
        \node[below=0 of v11] {(long path)};
        \node[above=0.9 of v1] {(a)};
      \end{scope}
      \begin{scope}[shift={(4, -0.5)}]
        \drawgraph
        \node[start-vertex,blue,text=white](s) at (v1) {$i$};
        \node[goal-vertex,draw=blue,thick](g) at (v5) {$i$};
        \draw[line,->,very thick,blue](s)--(g);
        \draw[line,->,orange,thick,densely dotted]($(v6)+(0.05,-0.05)$)--($(v8)+(0.05,0.1)$);
        \draw[line,->,teal,thick,densely dotted]($(v9)+(0.05,-0.05)$)--($(v10)+(0.05,0.1)$);
        \node[above=0.9 of v1] {(e)};
      \end{scope}
      \begin{scope}[shift={(4, -3.75)}]
        \drawgraph
        \node[start-vertex,blue,text=white](s) at (v3) {$i$};
        \node[goal-vertex,draw=blue,thick](g) at (v5) {$i$};
        \node[start-vertex,teal,text=white](k) at (v4) {$k$};
        \node[above left=-0.2 of k] {\cross};
        \draw[line,->,very thick,blue](s)--(v1)|-(v11)-|(g);
        \draw[line,->,orange,thick,densely dotted]($(v6)+(0.05,-0.05)$)--($(v8)+(0.05,0.1)$);
        \node[above=0.9 of v1] {(f)};
      \end{scope}
      \begin{scope}[shift={(4, -7)}]
        \drawgraph
        \node[start-vertex,blue,text=white](s) at (v3) {$i$};
        \node[goal-vertex,draw=blue,thick](g) at (v5) {$i$};
        \node[start-vertex,teal,text=white](k) at (v4) {$k$};
        \node[start-vertex,orange,text=white](j) at (v2) {$j$};
        \node[above left=-0.2 of j] {\cross};
        \node[above left=-0.2 of k] {\cross};
        \node[above=0.9 of v1] {(g)};
      \end{scope}
      \begin{scope}[shift={(0, -3.75)}]
        \drawgraph
        \node[start-vertex,orange,text=white](s) at (v6) {$j$};
        \node[goal-vertex,draw=orange,thick](g) at (v8) {$j$};
        \draw[line,->,very thick,orange](s)--(g);
        \draw[line,->,blue,thick,densely dotted]($(v1)+(0.05,0.05)$)--($(v5)+(-0.1,0.05)$);
        \draw[line,->,teal,thick,densely dotted]($(v9)+(0.05,-0.05)$)--($(v10)+(0.05,0.1)$);
        \node[above=0.9 of v1] {(b)};
      \end{scope}
      \begin{scope}[shift={(0, -7.0)}]
        \drawgraph
        \node[start-vertex,orange,text=white](s) at (v7) {$j$};
        \node[goal-vertex,draw=orange,thick](g) at (v8) {$j$};
        \node[start-vertex,blue,text=white](i) at (v2) {$i$};
        \node[above left=-0.2 of i] {\cross};
        \draw[line,->,very thick,orange](s)--(v1)--(g);
        \draw[line,->,teal,thick,densely dotted]($(v9)+(0.05,-0.05)$)--($(v10)+(0.05,0.1)$);
        \node[above=0.9 of v1] {(c)};
      \end{scope}
      \begin{scope}[shift={(0, -10.7)}]
        \drawgraph
        \node[start-vertex,teal,text=white](s) at (v9) {$k$};
        \node[goal-vertex,draw=teal,thick](g) at (v10) {$k$};
        \draw[line,->,very thick,teal](s)--(g);
        \draw[line,->,blue,thick,densely dotted]($(v1)+(0.05,0.05)$)--($(v5)+(-0.1,0.05)$);
        \draw[line,->,orange,thick,densely dotted]($(v6)+(0.05,-0.05)$)--($(v8)+(0.05,0.1)$);
        \node[fit=(v1)(v5)(v6)(v11), draw=lightgray,inner sep=12pt,label=above:{$\P_k$}] {};
        \node[above=0.9 of v1] {(d)};
      \end{scope}
      \begin{scope}[shift={(4, -10.7)}]
        \drawgraph
        \node[start-vertex,blue,text=white](s-i) at (v1) {$i$};
        \node[goal-vertex,draw=blue,thick](g-i) at (v5) {$i$};
        \node[start-vertex,orange,text=white](s-j) at (v6) {$j$};
        \node[goal-vertex,draw=orange,thick](g-j) at (v8) {$j$};
        \node[start-vertex,teal,text=white](s-k) at (v9) {$k$};
        \node[goal-vertex,draw=teal,thick](g-k) at (v10) {$k$};
        \draw[line,->,very thick,blue](s-i)|-(v11)-|(g-i);
        \draw[line,->,very thick,orange](s-j)--(g-j);
        \draw[line,->,very thick,teal](s-k)--(g-k);
        \node[above=0.9 of v1] {(h)};
      \end{scope}
      \coordinate[](c2) at (3.7,0.9);
      \coordinate[](c3) at (7.0,-8.3);
      \node[fit=(c2)(c3),draw=lightgray,label=above:{$\P_i$}] {};
      \draw[line,->](5.0,-1.9)--(5.0,-2.7);
      \node[anchor=west] at (5.0,-1.95) {\tiny event: $e^2$};
      \node[anchor=west] at (5.0,-2.2)  {\tiny crash: $\langle k; v_4; t=2 \rangle$};
      \node[anchor=west] at (5.0,-2.45) {\tiny effect: $\langle v_4; t=4 \rangle$};
      \draw[line,->](5.0,-5.2)--(5.0,-6.0);
      \node[anchor=west] at (5.0,-5.3) {\tiny event $e^3$};
      \node[anchor=west] at (5.0,-5.55){\tiny crash: $\langle j; v_2; t=3 \rangle$};
      \node[anchor=west] at (5.0,-5.8) {\tiny effect: $\langle v_2; t=4 \rangle$};
      \coordinate[](c2) at (-0.4,-2.2);
      \coordinate[](c3) at (2.8,-8.3);
      \node[fit=(c2)(c3),draw=lightgray,label=above:{$\P_j$}] {};
      \draw[line,->](0.8,-5.2)--(0.8,-6.0);
      \node[anchor=west] at (0.85,-5.3) {\tiny event: $e^1$};
      \node[anchor=west] at (0.85,-5.55){\tiny crash: $\langle i; v_2; t=2 \rangle$};
      \node[anchor=west] at (0.85,-5.8) {\tiny effect: $\langle v_2; t=3 \rangle$};
    \end{tikzpicture}
    \caption{Failure example of \algoname in SYN.}
    \label{fig:failure-example}
  \end{figure}
}

\section{Further Experimental Results}

We additionally present two empirical results: the effect of the refinement over the initial plans and the result in larger grids.
\Cref{fig:grids} shows the used grids in the experiment.

{
  \newcommand{\colsize}{0.25\linewidth}
  \newcommand{\imgwidth}{0.7\linewidth}
  \setlength{\tabcolsep}{0pt}
  \begin{figure}[ht!]
    \begin{tabular}{cccc}
      \begin{minipage}{\colsize}
        \centering
            {\tiny \mapname{random-32-32-10}}
            \\\vspace{-0.15cm}{\tiny 32x32; $|V|$=922}\\
            \includegraphics[width=\imgwidth]{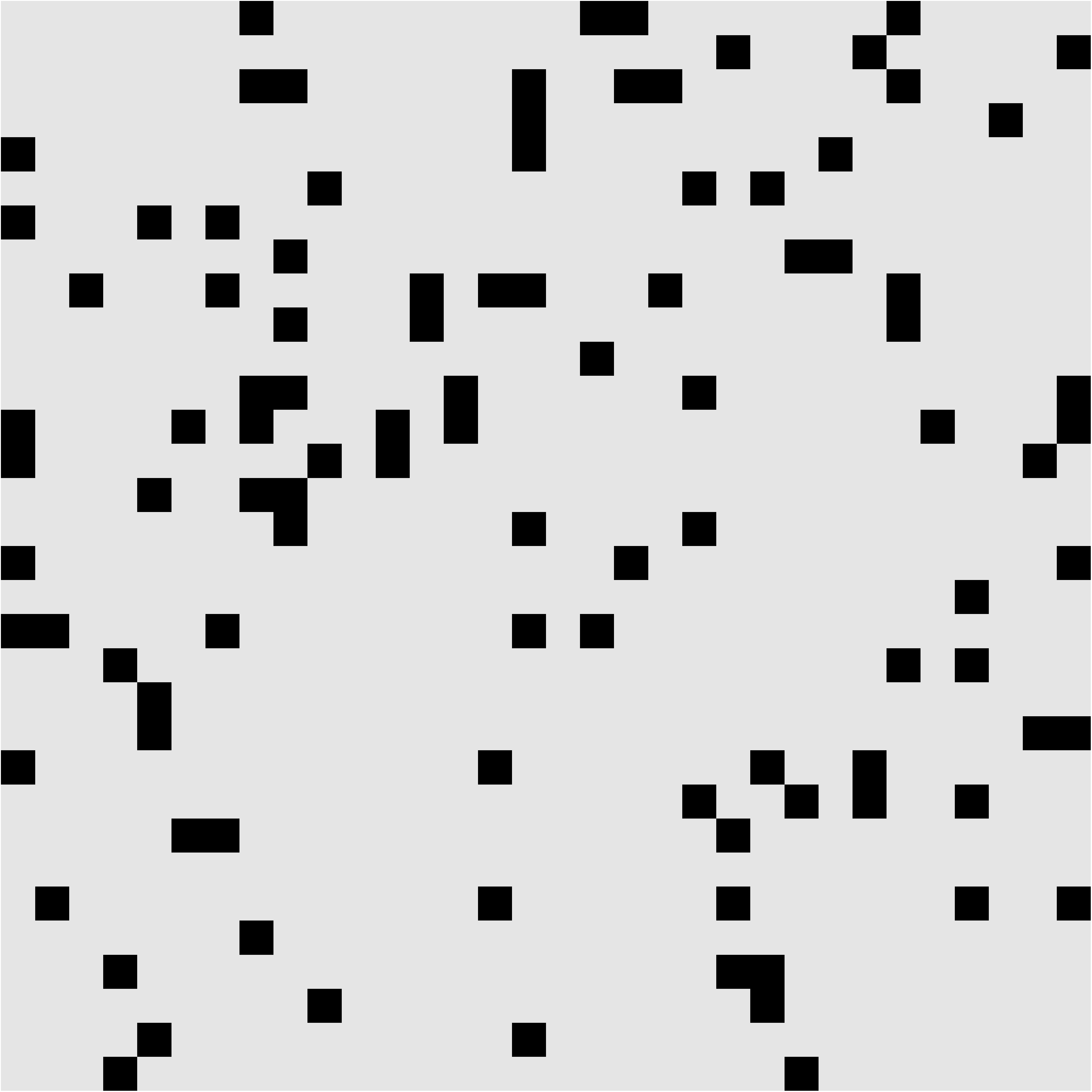}
      \end{minipage}
      &
      \begin{minipage}{\colsize}
        \centering
            {\tiny \mapname{random-64-64-10}}
            \\\vspace{-0.15cm}{\tiny 64x64; $|V|$=3,687}\\
            \includegraphics[width=\imgwidth]{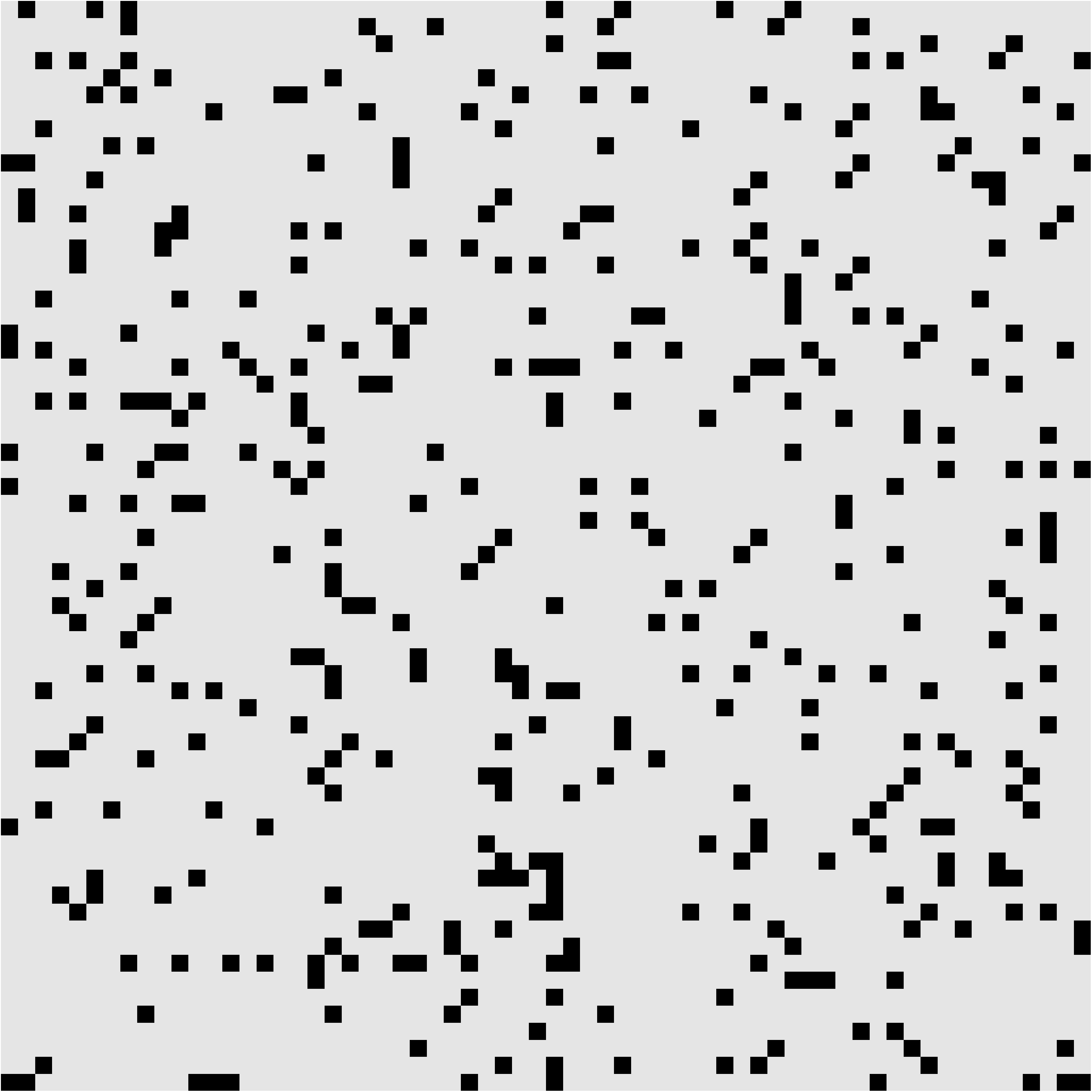}
      \end{minipage}
      &
      \begin{minipage}{\colsize}
        \centering
            {\tiny \mapname{warehouse-20-40-10-2-2}}
            \\\vspace{-0.15cm}{\tiny 340x164; $|V|$=38,756}\\
            \includegraphics[width=\imgwidth]{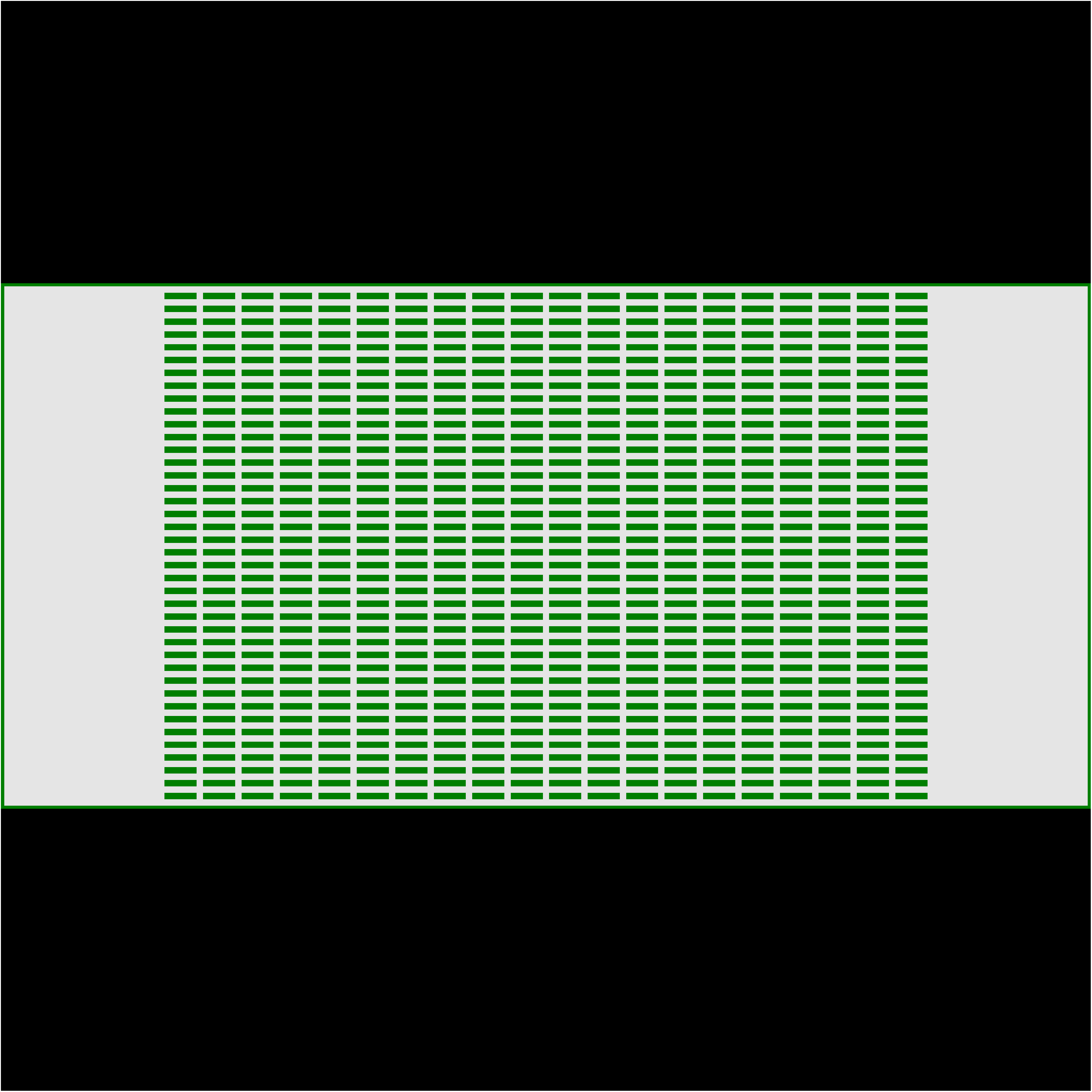}
      \end{minipage}
      &
      \begin{minipage}{\colsize}
        \centering
            {\tiny \mapname{Paris\_1\_256}}
            \\\vspace{-0.15cm}{\tiny 256x256; $|V|$=47,240}\\
            \includegraphics[width=\imgwidth]{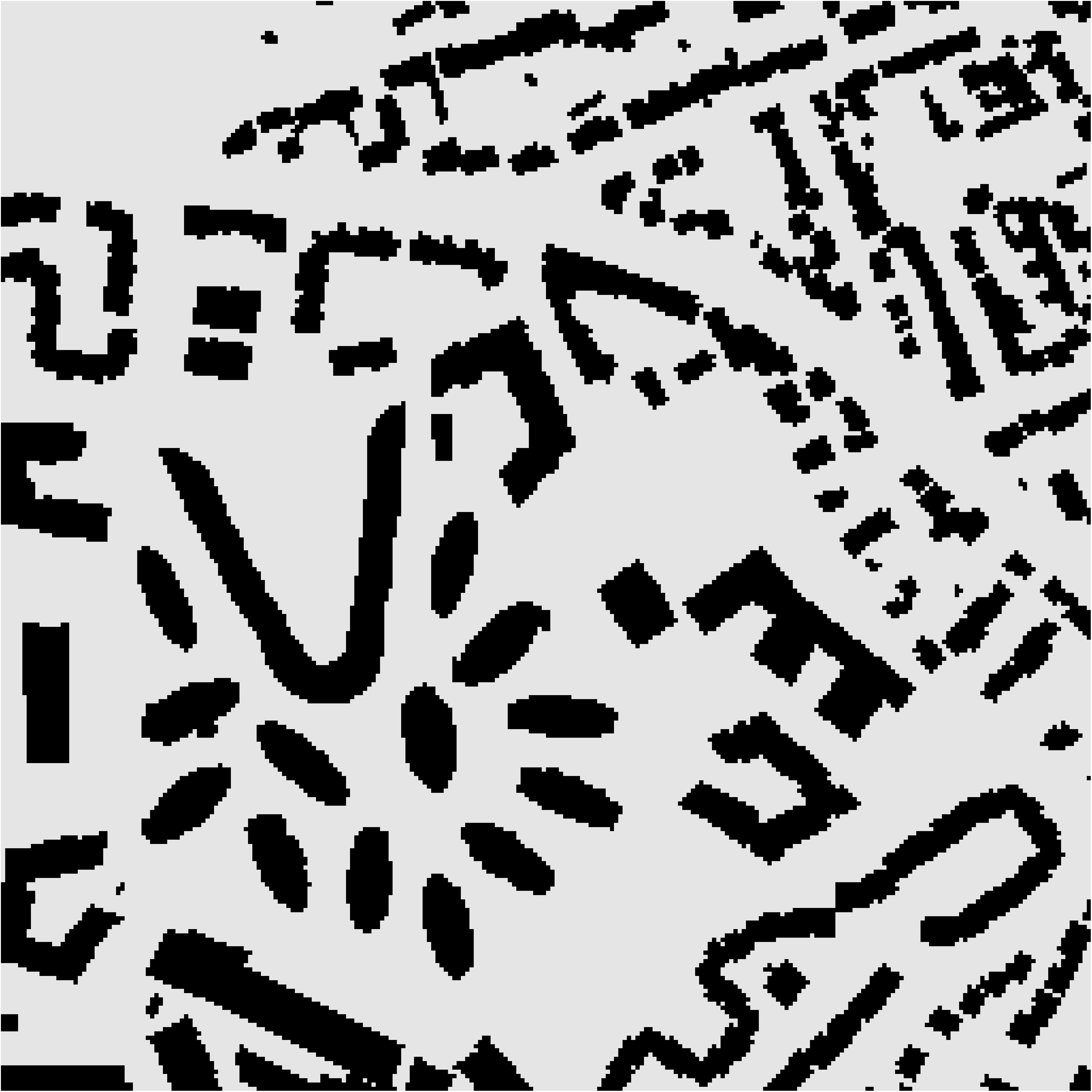}
      \end{minipage}
    \end{tabular}
    \caption{Used grids in the experiments.}
    \label{fig:grids}
  \end{figure}
}

\subsection{Effect of Refinement}
Here, we complement the effect of refinement over the initial paths introduced in \cref{sec:impl}.
\Cref{fig:result-refine} presents the number of solved instances given a certain time, over the identical instances to those of \cref{fig:result-main} (the scenario of fixed crashes).
The timeout was set to \SI{30}{\second}.
We can see a slight effect in SYN while not so in SEQ.
This is owing to that finding deadlock-free (backup) path itself is difficult.

{
  \newcommand{\colsize}{0.46\linewidth}
  \setlength{\tabcolsep}{1pt}
  \newcommand{\block}[2]{
    \begin{minipage}{\colsize}
      \centering
      \includegraphics[width=1.0\linewidth]{fig/raw/cactus_#1_#2_fix_crash.pdf}
    \end{minipage}
  }
  \begin{figure}[th!]
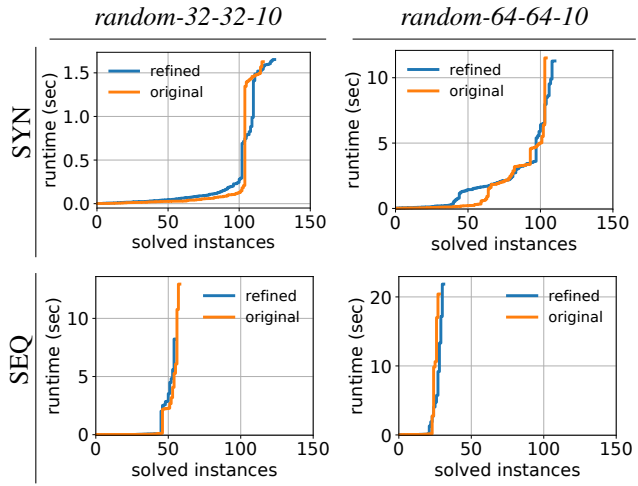

    \begin{tabular}{ccc}
      &
      \mapname{random-32-32-10} &
      \mapname{random-64-64-10}
      \\\cmidrule(lr){2-2}\cmidrule(lr){3-3}
      \multicolumn{1}{c|}{\rotatebox{90}{SYN}} &
      \block{random-32-32-10}{sync} &
      \block{random-64-64-10}{sync}
      \medskip\\
      \multicolumn{1}{c|}{\rotatebox{90}{SEQ}} &
      \block{random-32-32-10}{seq}&
      \block{random-64-64-10}{seq}
    \end{tabular}
    \caption{
      \textbf{Effect of refinement.}
      The number of solved instances until a certain time is visualized.
      The identical instances to those of \cref{fig:result-main} were used.
      $f$ was fixed to one while the number of agents varied from 5 to 30 in \mapname{random-32-32-10} and from 10 to 60 in \mapname{random-64-64-10}.
    }
    \label{fig:result-refine}
  \end{figure}
}

\subsection{Results of Large Instances}
We evaluated \algoname in larger grids compared to the experiment in \cref{sec:evaluation}.
We fixed the number of crashes $f$ as one while changing the number of agents up to $80$.
25 well-formed instances in \mapname{warehouse-20-40-10-2-2} and \mapname{Paris\_1\_256} were prepared.

\Cref{table:result-large} presents the success rate of \algoname with the \SI{5}{\minute} timeout.
In SYC, \algoname solved a moderate number of instances even with tens of agents, while in SEQ, \algoname failed to most instances.
These gaps stem from the difficulty of finding deadlock-free paths in SEQ.

{
  \begin{table}[th!]
    \centering
    \begin{tabular}{rrrrrr}
      \toprule
      & $|A|$ & 20 & 40 & 60 & 80 \\
      \midrule
      \mapname{warehouse-}
      & SYN & 1.00 & 1.00 & 1.00 & 0.76 \\
      \mapname{20-40-10-2-2}
      & SEQ & 0.64 & 0.00 & 0.00 & 0.00
      \\\cmidrule(lr){1-6}
      \multirow{2}{*}{\mapname{Paris\_1\_256}}
      & SYN & 1.00 & 0.84 & 0.52 & 0.04 \\
      & SEQ & 0.24 & 0.00 & 0.00 & 0.00 \\
      \bottomrule
    \end{tabular}
    \caption{
      \textbf{Success rate of large instances.}
      The number of crashes is fixed as $f=1$.
    }
    \label{table:result-large}
  \end{table}
}

\end{document}